\newtheorem{thm}{Theorem}[section]
\newtheorem{coro}[thm]{Corollary}
\newtheorem{defi}[thm]{Definition}
\newtheorem{lem}[thm]{Lemma}
\newtheorem{prop}[thm]{Proposition}
\newtheorem{remark}[thm]{Remark}
\newtheorem{assum}[thm]{Assumption}
\title{Reinforcement learning}
\author{Zhou Qixuan}
\date{March 2022}
\title{A priori Estimates for Deep Residual Network in Continuous-time Reinforcement Learning}
\author{
    Shuyu Yin \footnotemark[1] \, \footnotemark[3]
    \and
    Qixuan Zhou \footnotemark[1] \, \footnotemark[4]
    \and
    Fei Wen \footnotemark[3]
    \and
    Tao Luo \footnotemark[2] \, \footnotemark[5] \, \footnotemark[6]
}
\date{\today}
\begin{document}
\maketitle

\renewcommand{\thefootnote}{\fnsymbol{footnote}}
\footnotetext[1]{Equal Contribution}
\footnotetext[2]{Corresponding Author: luotao41@sjtu.edu.cn} 
\footnotetext[3]{Department of Electronic Engineering, Shanghai Jiao Tong University}
\footnotetext[4]{School of Mathematical Sciences, Shanghai Jiao Tong University}
\footnotetext[5]{School of Mathematical Sciences, Institute of Natural Sciences, MOE-LSC, CMA-Shanghai, Shanghai Jiao Tong University} 
\footnotetext[6]{Shanghai Artificial Intelligence Laboratory}

\begin{abstract}
Deep reinforcement learning excels in numerous large-scale practical applications. However, existing performance analyses ignores the unique characteristics of continuous-time control problems, is unable to directly estimate the generalization error of the Bellman optimal loss and require a boundedness assumption. Our work focuses on continuous-time control problems and proposes a method that is applicable to all such problems where the transition function satisfies semi-group and Lipschitz properties. Under this method, we can directly analyze the \emph{a priori} generalization error of the Bellman optimal loss. The core of this method lies in two transformations of the loss function. To complete the transformation, we propose a decomposition method for the maximum operator. Additionally, this analysis method does not require a boundedness assumption. Finally, we obtain an \emph{a priori} generalization error without the curse of dimensionality.
\end{abstract}

\textbf{Keywords:} \emph{A priori} Estimates, Residual Network, Continuous-time Reinforcement Learning, Bellman Optimal Loss. 

\section{Introduction}\label{sec::Intro}

Reinforcement learning methods, characterized by their consecutive interaction with the environment and learning from feedback, are adept at solving a wide range of sequential decision-making and control problems. Deep reinforcement learning  (DRL), which employs neural networks to represent value or policy functions, utilizes common algorithms such as DQN~\cite{mnih2015human}, PPO~\cite{schulman2017proximal}, TRPO~\cite{schulman2015trust}, and SAC~\cite{haarnoja2018soft}. Deep reinforcement learning has proven its capability to outperform  human-designed algorithms in numerous practical applications. These include discrete-time decision-making problems like Go~\cite{silver2016mastering}, recommendation systems~\cite{afsar2022reinforcement}, and combinatorial optimization problems~\cite{mazyavkina2021reinforcement}. Furthermore, deep reinforcement learning is also applicable to continuous-time control problems, such as robotic control~\cite{han2023survey}, UAV control~\cite{koch2019reinforcement}, Starcraft~\cite{vinyals2019grandmaster}, and quantitative trading~\cite{sun2023reinforcement}. To accommodate the reinforcement learning method, continuous-time control problems often necessitate time discretization.

While there are existing studies that analyze the theoretical performance of DRL, including aspects such as generalization error and regret~\cite{duan2021risk, long2021an, fan2020theoretical, yang2020function, wang2020provably, nguyen2021sample}, three significant gaps remain. First, existing works treat continuous-time and discrete-time control problems in a uniform manner, failing to utilize the characteristics of continuous-time control and to consider the actual impacts and errors introduced by the time step of discretization. Specifically, in continuous-time control problems, MDPs often exhibit the smooth policy property, implying that actions and state values for closely located states should be similar. Agents satisfy this property have better policies~\cite{shen2020deep}, however, current works often overlook this property. Moreover, applying reinforcement learning methods to continuous-time control problems requires discretization, and the choice of the time step of discretization is crucial. On the one hand, large step can lead to greater generalization error, resulting in less precise control and drastic action changes in real-world problems. On the other hand, small step can lead to excessive computational resource usage and increased costs. Current works can not guide us in choosing the appropriate step size. Second, existing works are based on surrogate loss functions and cannot directly provide the generalization error for the Bellman optimal loss. Existing works mainly estimate the generalization error for iterative algorithms like Fitted Q-Learning (FQI,~\cite{riedmiller2005neural}) and Least Squares Value Iteration (LSVI,~\cite{sutton2018reinforcement}), which construct surrogate losses based on the target network. Using surrogate losses can avoid dealing with the maximum operation, but the surrogate losses come in various forms, such as double Q-Learning~\cite{van2016deep}, Rainbow~\cite{hessel2018rainbow}, and existing analytical methods do not cover them. Furthermore, optimizing these surrogate losses ultimately aims to optimize the Bellman optimal loss. Hence, it is essential to directly estimate the generalization error of the Bellman optimal loss. Third, in existing generalization error estimations, it is often assumed that the function used for approximation is bounded, or that the values of the target network are bounded. However, this assumption is not commonly applied in practical situations. Therefore, it is necessary to eliminate the boundedness assumption.

In this work, we estimate the generalization error for the Bellman optimal loss in contin-uous-time control problems, which have a continuous state space and a finite action space. We utilize a multi-layer residual network, coupled with explicit regularization, to approximate the Q-function. To concentrate on the error bound analysis of function approximation, we take an assumption that the data in the empirical loss function is uniformly distributed. In addition, we introduce a discrete-time transition function and assume this function possesses semi-group and Lipschitz properties, which allows the value function to satisfy the smooth policy property. Moreover, we assume that the reward function is bounded and resides in the Barron space. This assumption is plausible as reward functions are typically custom-designed and we only require the reward function to be continuous with bounded values, a broad condition that most reward functions can meet.

In the overall analysis, the most challenging step is obtaining the approximation error of the Bellman optimal loss. To estimate this error bound, we formulate an Bellman effective loss by setting the time step of discretization to zero, which can be viewed as an extreme configuration. we are able to derive the explicit solution of Bellman effective loss, which is expressed using the reward function. Then we can employ a neural network to directly fit it through supervised loss and estimate the approximation error. Moreover, since the Bellman effective loss is bounded by the previously obtained supervised loss, we are able to obtain the approximation error for the Bellman effective loss. Finally, thanks to the Lipschitz property of the discrete-time transition function, we express the Bellman optimal loss as the sum of the Bellman effective loss and an additional term related to the Lipschitz constant and the time step of discretization. This allows us to obtain the approximation error of the Bellman optimal loss. 

\subsection{Contribution}

% 我们的方法适用于所有transiiton function满足半群和Lipstize条件的连续时间控制问题，往往在这类问题环境中由强化学习训练得到的agent性能更好。

Our primary contributions can be encapsulated in three key aspects: \textbf{1) More Realistic Settings for Continuous-time Control Problems:} Our method is applicable to all continuous-time control problems where the transition function satisfies semigroup and Lipschitz conditions. In such problem environments, agents trained by reinforcement learning often exhibit better performance~\cite{shen2020deep}. \textbf{2) Generalization Error Estimation for Bellman Optimal Loss:} Our method can directly handle the Bellman optimal loss, which is based on our new approach. This approach involves transforming the loss function twice: from the supervised loss of the effective solution to the Bellman effective loss, and from the Bellman effective loss to the Bellman optimal loss. Specifically, in the first transformation, we introduce a decomposition technique to deal with the maximum operator, which is often challenging to analyze. Additionally, we use the generalization error bound to guide the choice of time step of discretization. \textbf{3) Eliminated Boundedness Assumption:} Our analysis does not require the boundedness assumption, and our result does not suffer from the curse of dimensionality. The error bound relies only on the polynomial of the action space size, sample size, and neural network width.

\subsection{Related works}

This work is intimately linked to the body of literature on batch reinforcement learning, where the goal is to estimate the value function based on a provided sample dataset. This problem typically optimize a Bellman (optimal) loss in a least-square format. Because of the complexity of the neural networks, most existing works only consider using linear function to approximation the value function~\cite{boyan1999least,bradtke1996linear,farahmand2010error,lazaric2012finite,lazaric2016analysis,tagorti2015rate,lagoudakis2003least}. However, our work focuses on using neural networks to approximate the value function. Existing works under this setting generally consider FQI~\cite{duan2021risk, long2021an, fan2020theoretical, nguyen2021sample} and LSVI~\cite{yang2020function, wang2020provably} algorithms. In the context of the FQI algorithm, Fan \emph{et al.}~\cite{fan2020theoretical} provide a generalization error with bounded multi-layer fully connected neural networks, incorporating the phenomenon of distribution shift. Duan \emph{et al.}~\cite{duan2021risk} acknowledge the variance term in the empirical Bellman optimal loss and discuss the generalization error in the single sample regime under bounded functions. Similarly, Nguyen \emph{et al.}~\cite{nguyen2021sample} recognize the variance term and use uniform data convergence to mitigate it, thereby obtaining a generalization error under bounded multi-layer fully connected neural networks. Our work draws primary inspiration from Long \emph{et al.}~\cite{long2021an}, who introduce an explicit regularization term into the empirical loss function and achieve a generalization bound with two-layer neural networks, the neural networks utilized for approximation are unbounded, however, the target network necessitates a truncation operator. In the context of the LSVI algorithm, Yang \emph{et al.}~\cite{yang2020function} provide a generalization error using an Upper Confidence Bound (UCB,~\cite{sutton2018reinforcement}) sampling mechanism. Wang \emph{et al.}~\cite{wang2020provably} further enhance the UCB mechanism by integrating an effective sampling technique~\cite{langberg2010universal, feldman2011unified, feldman2020turning}, thereby obtaining a generalization bound under a general function space with bounded functions. Similar to the method employed by Long \emph{et al.}~\cite{long2021an}, our work also makes use of explicit regularization. However, we diverge in that we directly estimate the generalization error for the Bellman optimal loss, and our approach does not necessitate a boundedness assumption for the target network.
%~\cite{fan2020theoretical} provides a generalization error under bounded multi-layer fully connected neural networks, taking into account the distribution shift during training. 
%~\cite{duan2021risk} acknowledges the variance term in empirical Bellman loss and provides the generalization error in the single sample regime under a function space with bounded functions. 
%~\cite{nguyen2021sample} also recognizes the variance term, utilizing uniform data convergence to overcome it and obtain a generalization error under bounded multi-layer fully connected neural networks.
%~\cite{yang2020function} provides a generalization error for the LSVI algorithm with an Upper Confidence Bound (UCB,~\cite{sutton2018reinforcement}) sample mechanism.
%~\cite{wang2020provably} further incorporates an effective sampling technique~\cite{langberg2010universal, feldman2011unified, feldman2020turning} into the UCB mechanism and obtains a generalization bound under a general function space with bounded functions. 

Our work aligns closely with a significant body of research that concentrates on the generalization error and model capacity of deep neural networks. The field is particularly interested in the analysis of generalization error under various scenarios. Arora \emph{et al.}~\cite{arora2018stronger} demonstrated that a compression approach could improve the generalization error for deep neural networks. Bartlett \emph{et al.}~\cite{bartlett2017spectrally} presented a generalization error for multi-class classification problems, leveraging the size of the margin. Barron \emph{et al.}~\cite{barron2018approximation} provided a generalization error for under-parameterized neural networks. The approximation capabilities of over-parameterized neural networks across different function spaces were analyzed by Allen \emph{et al.}~\cite{allen2019learning}. Ma \emph{et al.}~\cite{ma2018priori} provided a priori estimates of two-layer neural networks under a regression setting. In another work, Ma \emph{et al.}~\cite{ma2019priori} provided a priori estimates of the residual network, and we adopted some of their analysis techniques in our work. The discussion also extends to model capacity related to norms. Zheng \emph{et al.}~\cite{zheng2019capacity} proposed a Basis-path Norm and derived a generalization error based on this norm. Neyshabur \emph{et al.}~\cite{neyshabur2015norm} provided the sample complexity for multi-layer neural networks under certain norm constraints.
%~\cite{arora2018stronger} demonstrated that a compression approach can enhance the generalization error of a deep neural network.
%~\cite{allen2019learning} analyzed the approximation capabilities of over-parameterized neural networks across different function spaces.
%~\cite{barron2018approximation} provided a generalization error for under-parameterized neural networks. 
%~\cite{bartlett2017spectrally} presented a generalization error for multi-class classification problems, utilizing the size of the margin.
%~\cite{neyshabur2015norm} provided a sample complexity for multi-layer neural networks under certain norm constraints.
%~\cite{zheng2019capacity} proposed a Basis-path Norm and constructed a generalization error based on it.
%~\cite{ma2018priori} provided an \emph{a priori} estimation of a two-layer neural network under a regression setting.
%~\cite{ma2019priori} provided an \emph{a priori} estimation of the residual network, and we have adopted some of the analysis techniques from this work.

\section{Preliminaries}

\subsection{Continuous-time reinforcement learning and discretization}

In this section, we present the notations and definitions of Markov Decision Processes (MDPs). A continuous-time deterministic discounted MDP is a tuple $(\gamma, S, A, r, g, \pi)$, where $\gamma \in (0,1)$ represents the discount factor, $S$ denotes the state space, $A$ signifies the action space, $r: S \times A \to \mathbb{R}$ is the reward function, $g: S \times A \to S$ is the deterministic transition function and $\pi: S \to A$ is a deterministic policy. Specifically, the action space $A = \{a_1, a_2, ..., a_{|A|}\}$ has a finite cardinality $|A|$. In this work, we assume $ S \subseteq [0,1]^d$, which can be easily adapted to a general compact domain in $\mathbb{R}^d$. Furthermore, we assume for any $(s,a) \in S \times A$, $|r(s,a)| \leq 1$. This can be easily extended to a bounded reward function setting.

For a given policy $\pi$ and a state $s$, the continuous-time state value function is defined as 
\begin{equation*}
    V^{\pi}(s) = \int_{0}^{\infty} e^{-\gamma t} r\left( s(t),\pi(s(t))\right) dt.
\end{equation*}
The optimal state value function is then defined as $V^{*}(s) = \max_{\pi} V^{\pi}(s)$.

\begin{remark}
    In real-world problems, given a compact set as the state space, the agent may reach the boundary. We assume that when the agent reaches the boundary, it will return to its previous position. Thus, the agent will stay in the compact domain.
\end{remark}

To apply reinforcement learning methods to solve this problem, we take a discretization. We define a discrete-time transition function, denoted as $\tilde{g}: S \times A \times \mathbb{R} \to S $, and $\Delta t \geq 0$ is the time step of discretization, where $\Delta t = 0$ represents an extreme case. With a slight misuse of notation, the state value function given a policy $\pi$ and a state $s$ is defined as 
\begin{equation*}
    V^{\pi}(s) = \sum_{i=0}^{\infty} \gamma^i r(s_i,\pi(s_i)),
\end{equation*}
where $s_0 = s$ and $s_{i+1} = \tilde{g}(s_{i}, \pi(s_{i}), \Delta t)$. The correponding state-action value function is
\begin{equation*}
    Q^{\pi}(s, a) = r(s_0,a) + \sum_{i=1}^{\infty} \gamma^i r(s_i,\pi(s_i)), \,\,\, V^{\pi}(s) = \max_{a \in A} Q^{\pi}(s,a).
\end{equation*}
The optimal state value function is defined as $V^{*}(s) = \max_{\pi} V^{\pi}$, and $Q^{*}(s, a) = \max_{\pi} Q^{\pi}$.

In this work, we primarily discuss the \emph{a priori} generalization error bound of the Bellman optimal loss, which is defined using the Bellman optimal equation. Given any $s \in S, a\in A, s' = \tilde{g}(s,a,\Delta t) \in S$, the \textbf{Bellman optimal equation} can be represented as 
\begin{equation}\label{eq::BellmanEq}
    Q(s,a) = r(s,a) + \gamma \max_{a' \in A} Q(s',a').
\end{equation}
$Q^*$ is the solution to this equation.

\subsection{Residual network and Barron space} \label{sec::barronSpace}
In this section, we present the residual network that has a skip connection at each layer:
\begin{equation}\label{eq::Def4RN}
    \begin{aligned}
         f(x,\theta) & =u^{\T}h^{[L]}, h^{[0]}=Vx,\ g^{[l]}=\sigma(W^{[l]}h^{[l-1]}),\ \\ h^{[l]} & =h^{[l-1]}+U^{[l]}g^{[l]}\text{ for }\ l \in \{1,\ldots,L\}.
    \end{aligned}
\end{equation}
Here, $x$ is an element of a compact set $ \fX \subset \sR^d$, $V\in \sR^{D\times d}$, $W^{[l]}\in \sR^{m\times D}$, $U^{[l]}\in \sR^{D\times m}$ for $l\in \{1,\ldots, L\}$, and $u\in \sR^D$. The collection of all parameters is denoted by $\theta=\operatorname{vec}\{u, V, \{W^{[l]}\}_{l=1}^L, \{U^{[l]}\}_{l=1}^L\}$, and $\sigma$ represents the ReLU function. The dimensions of the input, the width of the residual network, the width of the skip connection, and the depth of the neural network are represented by $d$, $m$, $D$, and $L$, respectively. Generally, the quadruple $(d,m,D,L)$ is used to describe the \textbf{size} of a residual network. Additionally, the function space for the residual networks is defined as:
\begin{equation}
    \hat{\fF}=\left\{f(\cdot, \theta)\colon\, f(\cdot, \theta) \text{ is a residual network as defined in~\eqref{eq::Def4RN}}\right\}.\label{eq::Fhat}
\end{equation}

In order to control the estimate, we introduce weighted path norm~\cite{ma2019priori}.
\begin{defi}[weighted path norm]\label{def::WeighPath}
    The \textbf{weighted path norm} of parameter $\theta$, real-valued residual network $f(\cdot, \theta)\in \hat{\fF}$ is 
    \begin{equation}\label{eq::WeighPath}
        \norm{\theta}_{\fP}=\Norm{\Abs{u^{\T}}(I+3\Abs{U^{[L]}}\Abs{W^{[L]}})\ldots(I+3\Abs{U^{[1]}}\Abs{W^{[1]}})\Abs{V}}_1,
    \end{equation}
    where the absolute value of a matrix or a vector means taking absolute value entry-wisely.
    We also define the \textbf{weight path norm} for real valued residual network
    \begin{equation*}
    \norm{f(\cdot, \theta)}_{\fP}=\norm{\theta}_{\fP}.
    \end{equation*}
    
\end{defi}

While the previous definition applies to real-valued functions, the state-value action function we want to approximate is a vector-valued function, necessitating an extension. We begin by defining the function space for vector-valued functions. Given a finite set $\mathbb{I}$ of size $\Abs{\mathbb{I}}$, it can be defined as:
\begin{equation}
    \fF=\left\{\{f(\cdot, \theta(i))\}_{i\in \sI}\colon\,f(\cdot,\theta(i))\in\hat{\fF}\text{ for all } i \in \sI \right\}.\label{eq::fF}
\end{equation}
Here, $\Theta=\operatorname{vec}\{\theta(i)\}_{i\in A}$, and $\theta(i)=\operatorname{vec}\{u_i, V_i, \{W^{[l]}_i\}_{l=1}^L, \{U^{[l]}_i\}_{l=1}^L\}$. The parameters of the residual network $f(\cdot, \theta(i))$ are represented by $\theta(i)$, as defined in \eqref{eq::Def4RN}. For each $i\in \sI$, $f(\cdot,\theta(i))$ is a function defined on $\fX$. Consequently, we have a total of $\abs{\sI}$ functions rather than a single one. Nonetheless, throughout the paper, we still consider $f$ as a single function on $ \fX \times \sI$. For instance, when continuity is required, we write $f\in C( \fX \times \sI)$. Based on this definition, we define the weighted path norm for vector-valued function as follow.

\begin{defi}[weighted path norm for vector-valued function]\label{def::PathVec}
    Given a finite set $\mathbb{I}$ with size $\Abs{\mathbb{I}}$, the \textbf{weighted path norm} of a $\Abs{\mathbb{I}}$-vector-valued residual network $f\in \fF$ with $\Theta=\{\theta(i)\}_{i \in \mathbb{I}}$ is
    \begin{equation*}
        \norm{\Theta}_{\fP}=\sum_{i \in \mathbb{I}}\norm{f(\cdot,\theta(i))}_{\fP}=\sum_{i \in \mathbb{I}}\norm{\theta(i)}_{\fP}.
    \end{equation*}
\end{defi}

In this paper, we consider the reward function in the Barron space, which is capable of addressing a majority of scenarios in real-world problems. Furthermore, as the reward function depends on the action, it is also a vector-valued function. In the subsequent definition, we initially introduce the Barron space for real-valued functions and subsequently extend it to vector-valued functions.

\begin{defi}[Barron space]\label{def::Barron}
    Given an activation function $\sigma: \sR \to \sR$, a real-valued function $f(x)$ defined on a compact domain $ \fX \subset \sR^d$ belongs to \textbf{Barron space}, that is, $f\in \fB(\fX)$ if and only if it can be written in the form
    \begin{equation*}
        f(y)=\mathbb{E}_{(u,w)\sim \rho}u\sigma (w\cdot x)\ \text{ for all}\ x \in  \fX,
    \end{equation*}
    where  $\rho$ is a probability distribution  over $\sR^{d+1}$.
    The \textbf{Barron norm} is defined as 
    \begin{equation*}
        \norm{f(\cdot)}_{\fB}=\inf_{\rho}\left(\mathbb{E}_{(u,w)\sim \rho}\abs{u}\norm{w}^2_1\right)^{1/2}.
    \end{equation*}
\end{defi}

\begin{defi}[Barron space for vector-valued function]\label{def::BarronVec}
    Given an activation function $\sigma: \sR \to \sR$, a finite set $\mathbb{I}$ with size $\Abs{\mathbb{I}}$ and a compact set $\fX \subset \sR^d$, a $|\sI|$-vector-valued function $f(\cdot,\cdot)$ defined on a domain $\fX \times \sI$ belongs to Barron space, i.e. $f\in \fB(\fX \times \sI)$ if and only if for each $i \in \sI$ it can be written in the form
    \begin{equation*}
        f(x,i)=\mathbb{E}_{(u_i,w_i)\sim \rho_i}u_i\sigma (w_i^{\T} x)\ \text{ for all}\ x\in  \fX,
    \end{equation*}
    where for each $i \in \sI$, $\rho_{i}$ is a probability distribution over $\sR^{d+1}$. The corresponding \textbf{Barron norm} is defined as
    \begin{equation*}
        \norm{f(\cdot,\cdot)}_{\fB}=\left(\sum_{i \in \sI}\norm{f(\cdot,i)}_{\fB}^2\right)^{1/2}.
    \end{equation*}
    We use $\norm{f}_{\fB}$ to replace $\norm{f(\cdot,\cdot)}_{\fB}$ for short notation.
\end{defi}

\begin{remark}
    From the Definition \ref{def::BarronVec}, we have the following relation: $\sum_{i \in \sI}\norm{f(\cdot,i)}_{\fB}^2 = \norm{f}_{\fB}^2$ and $|\sI|^{1/2} \norm{f}_{\fB} \geq \sum_{i \in \sI}\norm{f(\cdot,i)}_{\fB}$ because of Jensen's inequality.
\end{remark}

\subsection{Rademacher complexity}
The Rademacher complexity serves as a fundamental tool for generalization analysis. In this section, we will present the basic definition and some significant results. For the sake of completeness, they are enumerated below.

\begin{defi}[Rademacher complexity of a function class $\fF$]
    Given a set $\fS=$ $\left\{z_1, \ldots, z_n\right\}$ sample from distribution $P$, $\fS \sim P^n$, and a class $\fF$ of real-valued functions $\fZ \to \mathbb{R}$, the \textbf{Rademacher complexity} of $\fF$ on $\fZ$ is defined as
    \begin{equation*}
        \operatorname{Rad}_{\fS}(\fF)=\frac{1}{n} \mathbb{E}_\tau\left[\sup _{f \in \fF} \sum_{i=1}^n \tau_i f\left(z_i\right)\right],
    \end{equation*}
    and the \textbf{Rademacher complexity} is define as
    $\operatorname{Rad}(\fF)=\mathbb{E}_{\fS}\left[ \operatorname{Rad}_{\fS}(\fF) \right],$
    where $\tau_1, \ldots, \tau_n$ are independent random variables drawn from the Rademacher distribution, i.e., $\mathbb{P}\left(\tau_i=+1\right)=\mathbb{P}\left(\tau_i=-1\right)=\frac{1}{2}$ for $i=1, \ldots, n$.
\end{defi}

\begin{lem}[contraction lemma~\cite{ma2019priori}]\label{lem::RadContrac}
    Suppose that $\psi_i: \mathbb{R} \rightarrow \mathbb{R}$ is a $C$-Lipschitz function for each $i \in \{1,\ldots,n\}$. For any $\boldsymbol{y} \in \mathbb{R}^n$, let $\psi(\boldsymbol{y})=\left(\psi_1\left(y_1\right), \cdots, \psi_n\left(y_n\right)\right)^{\top}$. For an arbitrary set of vector functions $\fF$ of length $n$ on an arbitrary domain $ S$ and an arbitrary choice of samples $\fS=\left\{z_1, \ldots, z_n\right\}$ from distribution $P$, $\fS \sim P^n$, we have
    \begin{equation*}
        \operatorname{Rad}_S(\psi \circ \fF) \leq C \operatorname{Rad}_S(\fF), \text{ and }  \operatorname{Rad}(\psi \circ \fF) \leq C \operatorname{Rad}(\fF).
    \end{equation*}
\end{lem}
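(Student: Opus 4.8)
The plan is to prove the bound by the classical \emph{peeling} argument, in which the Lipschitz maps $\psi_1,\dots,\psi_n$ are replaced, one coordinate at a time, by the linear map $y\mapsto C_{\mathrm{L}}y$, each replacement only increasing the relevant supremum-expectation. After evaluating at the sample points one may identify $\fF$ with a subset of $\mathbb{R}^n$, so that $\psi\circ\fF=\{(\psi_1(v_1),\dots,\psi_n(v_n)):v\in\fF\}$ and $n\operatorname{Rad}_S(\fF)=\mathbb{E}_\tau\sup_{v\in\fF}\sum_{i=1}^n\tau_i v_i$. For $0\le k\le n$ I would introduce the hybrid quantity $\Phi_k:=\mathbb{E}_\tau\sup_{v\in\fF}\bigl(\sum_{i\le k}\tau_i\psi_i(v_i)+C_{\mathrm{L}}\sum_{i>k}\tau_i v_i\bigr)$, so that $\Phi_n=n\operatorname{Rad}_S(\psi\circ\fF)$ and $\Phi_0=C_{\mathrm{L}}\,n\operatorname{Rad}_S(\fF)$ (the nonnegative constant $C_{\mathrm{L}}$ pulls out of the joint supremum). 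It then suffices to show $\Phi_k\le\Phi_{k-1}$ for each $k$; this isolates the $k$-th coordinate, and the only property of $\psi_k$ that enters is its $C_{\mathrm{L}}$-Lipschitz continuity.

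For the passage from $\Phi_k$ to $\Phi_{k-1}$ I would condition on all $\tau_i$ with $i\ne k$ and write $g(v)$ for the $\tau_k$-free part of the inner expression (identical in $\Phi_k$ and $\Phi_{k-1}$). Averaging over $\tau_k\in\{\pm1\}$ gives $\mathbb{E}_{\tau_k}\sup_v\bigl(g(v)+\tau_k\psi_k(v_k)\bigr)=\tfrac12\sup_v\bigl(g(v)+\psi_k(v_k)\bigr)+\tfrac12\sup_v\bigl(g(v)-\psi_k(v_k)\bigr)$. Given $\varepsilon>0$, pick $v,v'\in\fF$ attaining the two suprema to within $\varepsilon$; then the right-hand side is at most $\tfrac12\bigl(g(v)+g(v')+\psi_k(v_k)-\psi_k(v'_k)\bigr)+\varepsilon$, and $\psi_k(v_k)-\psi_k(v'_k)\le C_{\mathrm{L}}\lvert v_k-v'_k\rvert$. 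If $v_k\ge v'_k$ this is $\le C_{\mathrm{L}}(v_k-v'_k)$, so the bracket is $\le(g(v)+C_{\mathrm{L}}v_k)+(g(v')-C_{\mathrm{L}}v'_k)\le\sup_v(g(v)+C_{\mathrm{L}}v_k)+\sup_v(g(v)-C_{\mathrm{L}}v_k)$; if $v_k<v'_k$ the symmetric choice of signs gives the same bound. Letting $\varepsilon\downarrow0$ and taking expectation over the remaining $\tau_i$ yields $\Phi_k\le\Phi_{k-1}$.

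Iterating over $k=n,n-1,\dots,1$ gives $\Phi_n\le\Phi_0$, i.e.\ $\operatorname{Rad}_S(\psi\circ\fF)\le C_{\mathrm{L}}\operatorname{Rad}_S(\fF)$, which is the claim. The main obstacle is precisely the single-coordinate step: getting the sign case-split right so that each supremum lands on the correctly linearized functional, and handling the fact that the suprema need not be attained (hence the $\varepsilon$-argument above). I would emphasize that no normalization such as $\psi_i(0)=0$ is needed here, because the Rademacher complexity in this lemma is the one-sided version — a supremum \emph{without} an outer absolute value — which is exactly what makes the peeling valid; a harmless finiteness assumption (for instance $\sup_{v\in\fF}\sum_i\lvert\psi_i(v_i)\rvert<\infty$, automatic in all the applications, where $\fF$ is bounded and the $\psi_i$ are the square-loss or $\operatorname{ReLU}$ type maps) is understood throughout.
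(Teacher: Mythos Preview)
Your argument is the standard Ledoux--Talagrand peeling proof of the contraction principle and is correct: the hybrid quantities $\Phi_k$ are set up properly, the single-coordinate step with the $\varepsilon$-approximate maximizers and the sign case-split is exactly the right way to trade $\psi_k$ for $C_{\mathrm L}\cdot\mathrm{id}$, and the observations about not needing $\psi_i(0)=0$ and about the one-sided (no absolute value) definition of $\operatorname{Rad}_S$ are spot-on.

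As for comparison: the paper does \emph{not} prove this lemma. It is stated in the appendix as a well-known result (introduced with ``we recall a well-known contraction lemma'') and left without proof, to be used as a black box in the Rademacher complexity estimates elsewhere in the paper. So there is no paper proof to compare against; your proposal simply supplies the classical argument that the paper chose to cite rather than reproduce.
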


\begin{thm}[two-sided Rademacher complexity and generalization gap~\cite{shalev2014understanding}]\label{thm::RadeandGenErr}
    Suppose that $f$ 's in $\fF$ are non-negative and uniformly bounded, i.e., for any $f \in \fF$ and any $z\in  \fZ, 0 \leq f(z) \leq B$. Then for any $\delta \in(0,1)$, with probability at least $1-\delta$ over the choice of $n$ i.i.d. random samples from distribution $P$, which donate as $\fS=\left\{z_1, \ldots, z_n\right\}$ from distribution $P$, $\fS \sim P^n$, we have
    $$
    \begin{aligned}
    & \sup _{f \in \fF}\Abs{\frac{1}{n} \sum_{i=1}^n f\left(z_i\right)-\mathbb{E}_{z} f(z)} \leq 2 \operatorname{Rad}(\fF)+B \sqrt{\frac{\ln (2 / \delta)}{2 n}}.
    \end{aligned}
    $$    
\end{thm}

% \begin{lem}
%     For any RN $f(s,\theta_1)$ and $f(s,\theta_2)$ with activation $\sigma=\operatorname{ReLU}$, either $f_{\Abs{\theta_2}}(s)=f_{\Abs{\theta_1}}(s)$ or $f_{\Abs{\theta_2}}(s)=\sigma(f_{\Abs{\theta_1}}(s))$ implies that $\norm{\theta_2}_{\fP}=\norm{\theta_1}_{\fP}$, where $\Abs{\theta}$ means taking the absolute values of all the entries of the vector
% \end{lem}
\subsection{Bellman optimal loss minimization with neural network approximation}
\label{sec::originlProb}
Let's start to establish the context for the problem and methods. Initially, we redefine the function spaces \eqref{eq::Fhat} and \eqref{eq::fF} from Section \ref{sec::barronSpace}, utilizing the state space $S$ and action space $A$. Subsequently, we extend our definition to the following function spaces:
\begin{equation*}
    \begin{aligned}
        \hat{\fF}_M&=\left\{ f(\cdot, \theta) \in \hat{\fF} \colon\, \norm{\theta}_{\fP}\le M \right\}, 
        \fF_M =\Bigl\{f(\cdot, \theta(\cdot)) \in \fF\colon\, \norm{\Theta}_{\fP}\le M \Bigr\}, \\
        \fZ^{\max}&=\left\{\max_{a\in A}f(\cdot, \theta(a))\colon\, f \in \fF\right\},\ 
        \fZ^{\max}_M =\left\{\max_{a\in A}f(\cdot, \theta(a))\colon\,f \in \fF_M\right\}. \\
        \fG &=\left\{f(\cdot, \theta(\cdot))-\gamma\max_{a\in A}f(\cdot, \theta(a))\colon\, f \in \fF\right\}, \\
        \fG_{M} &=\left\{f(\cdot, \theta(\cdot))-\gamma\max_{a\in A}f(\cdot, \theta(a))\colon\,f \in \fF_M\right\}.
    \end{aligned}
\end{equation*}
According to the given definition, if $f(\cdot, \theta(\cdot)) \in \fF_{M}$, then $f(\cdot, \theta(a)) \in \hat{\fF}_{M}$ for all $a \in A$.

From the Bellman equation given in \eqref{eq::BellmanEq}, we define the \textbf{Bellman optimal loss} as 
\begin{equation*}
    \fR_{\fD}(\Theta)=\frac{1}{2}\mathbb{E}_{s \sim \fD, a \sim \fU} \left(f(s, \theta(a))-r(s,a)-\gamma\max_{a'\in A}f(s', \theta(a'))\right)^2, 
\end{equation*}
where $\fD$ and $\fU$ are uniform distributions over $S$ and $A$ respectively, $f \in \fF$, $\gamma\in (0,1)$, $r(\cdot,a)\in C( S)$ for all $a\in A$ and $s'=\tilde{g}(s,a,\Delta t)$. Finding $f^*$ that minimizes the Bellman optimal loss refers as \textbf{Bellman optimal loss minimization problem}. 

For sample $\fS=\{s_i, a_i, s_i', r\}_{i=1}^{n}$, the corresponding \textbf{empirical loss function} is
\begin{equation*}
    \fR_{\fS}(\Theta)=\frac{1}{2n}\sum_{i=1}^{n}\left[f(s_i,\theta(a_i))-r(s_i,a_i)-\gamma \max_{a'\in A}f(s'_i,\theta(a'))\right]^2.
\end{equation*}

In this study, we optimize a regularized empirical loss to find the solution for the Bellman optimal loss. The optimization problem is defined as follows:
\begin{equation}\label{eq::RegLoss}
    \min_{f \in \fF} \left \{ \fR_{\fS}(\Theta) + \lambda \Lambda(\norm{\Theta}_{\fP}) \right\},
\end{equation}
where $\Lambda(\cdot)$ represents the regularization function and $\lambda$ is the regularization constant. The regularization technique is widely used and beneficial in the training of neural networks. It can prevent overfitting~\cite{santos2022avoiding}, enhance the generalization ability~\cite{yoshida2017spectral, foret2020sharpness}, assist in feature selection~\cite{zhang2019feature}, and increase the interpretability of the model~\cite{wu2017improving, wu2021optimizing}. 

\subsection{Bellman effective loss minimization problem}
In Section \ref{sec::barronSpace}, we introduce a discrete transition function $\tilde{g}$, which is a key concept in our modeling. We now make some basic assumptions for it.
\begin{assum}[properties of transition function]\label{assum::SemiGroup}
    \begin{enumerate}
        \item Assume that $\tilde{g}$ is a semi-group, i.e., $\tilde{g}(\tilde{g}(s,a,t),a,\tilde{t})=\tilde{g}(s,a,t+\tilde{t})$ for all $t,\tilde{t} \geq 0$; $\tilde{g}(s,a,0)=s$.
        \item Assume $\tilde{g}$ is Lipschitz in $t$, that is for any $s \in S, a\in A$ and $t\in [0,+\infty)$, there is constant $C_T>0$ such that for all $\Delta t \in (0, 1)$, $\norm{\tilde{g}(s,a, t)-\tilde{g}(s,a, t+\Delta t)}_{\infty}\le C_T \Delta t$.
    \end{enumerate}
\end{assum}

From this assumption, we have $s' = \tilde{g}(s,a, \Delta t) = s + O(\Delta t)$. If there is a continuous function $f(s,a)$ satisfies equation \eqref{eq::BellmanEq}, we have
\begin{equation*}
    \begin{aligned}
        f(s,a)
        &=r(s,a)+\gamma \max_{a'\in A}f(s+O(\Delta t),a')\\
        &=r(s,a)+\gamma\max_{a'\in A}f(s,a')+O(\Delta t)\frac{\diff{f}}{\diff{s}}(s,a').
    \end{aligned}
\end{equation*}
This relation represents the smooth policy property. Additionally, if $\Delta t \to 0$, the third term will be negligible. So we can consider an extreme case, where we set $\Delta t$ to zero, we obtain a \textbf{Bellman effective equation} as
\begin{equation}\label{eq::ModEq}
    f^*(s,a)=r(s,a)+\max_{a'\in A}f^*(s,a').
\end{equation}
Then we define the \textbf{Bellman effective loss} following the equation \eqref{eq::ModEq} with uniform distribution $\fD$ and $\fU$ as
\begin{equation*}
    \tilde{\fR}_{\fD}(\Theta)=\frac{1}{2}\mathbb{E}_{s \sim \fD, a \sim \fU} \left(f(s, \theta(a))-r(s,a)-\gamma\max_{a'\in A}f(s, \theta(a'))\right)^2.
\end{equation*}
Finally, we define the \textbf{Bellman effective loss minimization problem} as finding the function $f^*$ that minimizes the Bellman effective loss. We donate this $f^*$ as \textbf{effective solution}. 
    
\section{Main results}\label{sec::MainResults}

\subsection{Main theorem}
We first introduce a mild assumption regarding the reward function.

\begin{assum}[reward function is Barron]\label{assump::RewardFunctionBarron}
    Suppose that $r(\cdot,a)\in \fB( S\times A)$ for each $a \in A$ and that for all $(s,a)\in S \times A$, $|r(s,a)| \leq 1$.
\end{assum}

The main theorem provides an \emph{a priori} estimates of the Bellman optimal loss with residual network approximation and explicit regularization. 

\begin{thm}[\emph{a priori} generalization error bound for Bellman optimal loss]\label{thm::AprioriGenErrBoun}
    Suppose that Assumption~ \ref{assum::SemiGroup} and \ref{assump::RewardFunctionBarron} hold, let 
    \begin{equation*}
        \hat{\Theta}=\arg\min_{\Theta}\fJ_{S,\lambda}(\Theta):=\fR_{\fS}(\Theta)+\frac{\lambda}{\sqrt{n}}\norm{\Theta}_{\fP}^2\ln (4(\norm{\Theta}_{\fP}+1)),
    \end{equation*}
    with a residual network $f(s, \hat{\theta}(a))$ of size $(d,(6^{\alpha}+1)m,D_0,\alpha+1)$ and $\hat{\Theta}=\{\hat{\theta}(a)\}_{a\in A}$. 
    Then for any $\gamma \in (0, 1)$ and $\delta\in (0,1)$, with probability at least $1-\delta$ over the choice of a i.i.d. sample $\fS=\{s_i, a_i, s_i', r\}_{i=1}^{n}$ and $\lambda > 72|A|^3\sqrt{2\ln (2d)} + 9\ln(\Abs{A}/\delta) + 18$, we have 
    \begin{equation} \label{eq::ApriorGenResult}
        \begin{aligned}
            \fR_{\fD}(\hat{\Theta}) 
            & \leq \frac{\operatorname{Poly}(\Abs{A})}{(1-\gamma)^2} \left( \frac{1}{m} + (\Delta t)^2  \right) \norm{r}_{\fB}^2 + \frac{(\lambda + 1)\operatorname{Poly}(\Abs{A}, \ln d, \Abs{\ln \delta})}{(1-\gamma)^2 \sqrt{n}} \norm{r}_{\fB}^2 \left( \norm{r}_{\fB} + 1\right)
        \end{aligned}.
    \end{equation}
\end{thm}

\begin{remark}
    In this section, we employ $\mathrm{Poly}(\cdot)$ to streamline the expression. Here, $\mathrm{Poly}(\cdot)$ denotes a specific polynomial function that may vary from one line to another. The original expression is presented in Section \ref{sec::ProofThm}.
\end{remark}

\begin{remark}
    Selecting an appropriate $\Delta t$ involves a trade-off. In \eqref{eq::ApriorGenResult}, $\Delta t$ governs the sample complexity, with a smaller $\Delta t$ resulting in a smaller error bound. One can choose $\Delta t \leq O(\max \{ 1/m^{1/2}, 1/n^{1/4} \})$, where the term contains $\Delta t$ will not be the dominant term in the bound. Moreover, the choice of $\Delta t$ impacts both computational efficiency and control accuracy in the actual problem. A larger $\Delta t$ enhances computational efficiency at the expense of control accuracy, while a smaller $\Delta t$ improves control accuracy but reduces computational efficiency. Balancing these two aspects, one can select $\Delta t = O(\max \{ 1/m^{1/2}, 1/n^{1/4} \})$.
\end{remark}

\begin{remark}
    We would like to emphasize that our estimate is nearly optimal in terms of the sample size and the model size. For the first term, we derive $m = (NL) / C_{|A|}$ from Theorem \ref{thm::apprErrorOrigin}, where $N$ represents the total width of the neural network, $L$ denotes the depth of the neural network, and $C_{|A|}$ is a finite constant associated with the size of the action space. The convergence rate is $O(1/(LN))$, which aligns with the rate in the universal approximation theory for shallow networks~\cite{barron1993universal}. The second term illustrates the rate with respect to the sample size as $O(1/\sqrt{n})$, which corresponds to the Monte Carlo rate derived from classical estimates of the generalization gap. Also, $\Delta t$ is a small constant. Moreover, the bound dependent on the polynomial of the action space size, which is a finite number and thus, does not contribute to the curse of dimensionality.
\end{remark}

\subsection{Proof sketch}

The structure of the proof is illustrated in Figure \ref{fig:theorem_diagram}. The proof process can be divided into two parts: the Bellman effective loss minimization problem and the Bellman optimal loss minimization problem. The \emph{a priori} generalization error of the Bellman optimal loss is obtained by combining its corresponding approximation error and the a posterior generalization error. The non-trivial part of the proof is estimating the approximation error of the Bellman optimal loss, which we use two transfers to achieve it. Fist of all, since the Bellman effective loss has an explicit solution, we can construct a supervised loss that fits this solution and estimate the corresponding approximation error. Also, the approximation error of the Bellman effective loss can be bounded by the combination of the approximation error of the supervised loss, then we can estimate the approximation error of the Bellman effective loss, completing the first transfer. Afterward, since the transition function has the Lipschitz property, the approximation error of the Bellman optimal loss can be controlled by the approximation error of the Bellman effective loss plus a small quantity, completing the second transfer.

% The structure of the proof is illustrated in Figure \ref{fig:theorem_diagram}. The proof process can be divided into two parts.
% In the first part, we estimate the approximation error within the context of the effective problem. In the second part, we transfer to the original problem and derive its \emph{a priori} generalization error. In effective problem, we provide the approximation error in terms of supervised learning loss and Bellman effective loss. After we shift to the original problem, we present the final \emph{a priori} generalization error for the Bellman optimal loss.

% Throughout the proof, there are two transitions of the loss function: from supervised loss to Bellman effective loss, and from Bellman effective loss to Bellman optimal loss. Initially, we establish the approximation error bound for a supervised loss of the effective solution. Because the solution of the Bellman effective loss has an explicit form, we employ a regression method to fit the solution and determine the approximation error. Subsequently, the Bellman effective loss is bounded by a combination of supervised loss and we are able to estimate the approximation error of the Bellman effective loss, which accomplishes the transition from supervised loss to Bellman effective loss. Following this, thanks to the Lipschitz property of the transition function $\tilde{g}$, the Bellman optimal loss is bounded by a combination of the Bellman effective loss and an additional term. This allows us to transition from the Bellman effective loss to the Bellman optimal loss.

\begin{figure}[htb]
\centering
% include first image
\includegraphics[width=1\linewidth]{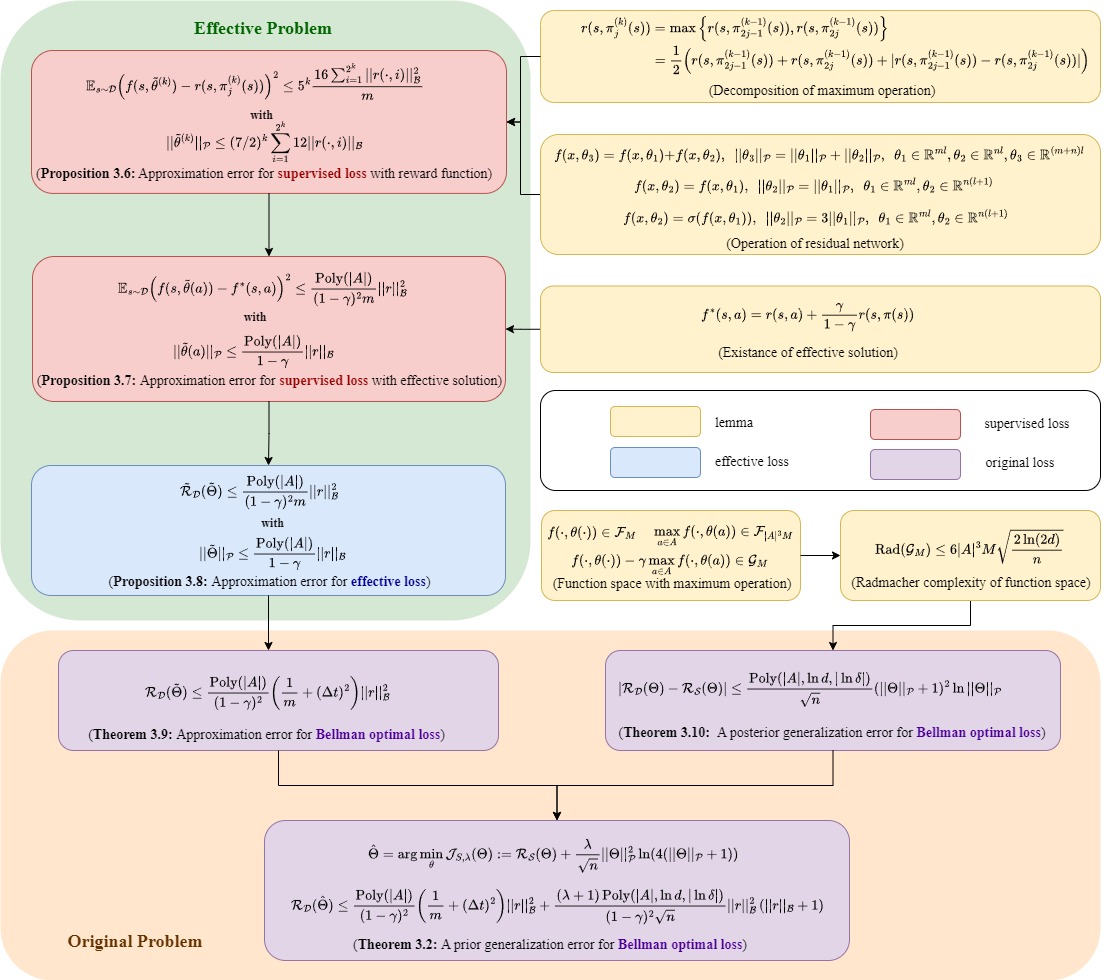}
\caption{Sketch of proof for Theorem \ref{thm::AprioriGenErrBoun}}
\label{fig:theorem_diagram}
\end{figure}

The proof starts from the Bellman effective loss minimization problem. Lemma \ref{lem::ExistofMod} demonstrates that this problem possesses an explicit solution given by:
\begin{equation}\label{eq::EffectiveSol}
        f^*(s,a)=r(s,a)+\frac{\gamma}{1-\gamma}\max_{a' \in A} r(s,a').
\end{equation}
This effective solution can be expressed by reward functions.

We now proceed to estimate the error bound for the supervised loss of the effective solution. However, estimating the error bound between $f(\cdot, \theta)$ and $\max_{a' \in A} r(s,a')$ poses a challenge due to the maximum operation. Interestingly, we observe that the maximum operation over a set can be transformed into a maximum operation over two elements in a binary tree structure, as depicted in Figure \ref{fig:tree_structure_of_policy}. Besides, the maximum of two elements can be expressed as:
\begin{equation} \label{eq::maxOpera2}
    \max\{a,b\} = \frac{1}{2}(a + b + |b - a|) = \frac{1}{2}(a + b + \sigma(a - b) + \sigma(b-a)),
\end{equation}
where $\sigma$ represents the ReLU function. Consequently, we can leverage the structure of the neural network to replace the maximum operation.

Building upon the aforementioned idea, we can construct a binary tree for the reward function. We set $A=\{1,\ldots,2^{\alpha}\}$ without loss of generality. Even if the size of the action space is less than $2^{\alpha}$, we can still construct a binary tree and follow the same methodology. We define $A^{(k)}=\{\pi^{(k)}_{1}(s),\ldots, \pi^{(k)}_{2^{\alpha-k}}(s)\}$ for $k\in \{1,\ldots,\alpha\}$, and $\pi^{(0)}_{j}(s)=j,$ for $j\in A$, $s\in S$. Also, for $j\in \{1,\ldots, 2^{\alpha-k}\}$ and $k\in\{1,\ldots, \alpha\}$, we define:
\begin{equation}\label{eq::maxOperationPolicy}
        \pi^{(k)}_{j}(s)=\arg\max_{\{\pi^{(k-1)}_{2j-1}(s),\pi^{(k-1)}_{2j}(s)\}}\{r(s,\pi^{(k-1)}_{2j-1}(s)),r(s,\pi^{(k-1)}_{2j}(s))\}.
\end{equation}
$\pi^{(k)}_{j}(s)$ can also be represented in the format of \eqref{eq::maxOpera2}, as given in Lemma \ref{lem::FuncDecomp}. The relationship between $\pi^{(k)}_{j}(s)$ shows in Figure \ref{fig:tree_structure_of_policy}, which is a binary tree. Note that $\pi^{(\alpha)}_{1}(s) = \pi(s)=\arg\max_{a\in A}r(s,a)$.

\begin{figure}[htb]
\centering
% include first image
\includegraphics[width=0.6\linewidth]{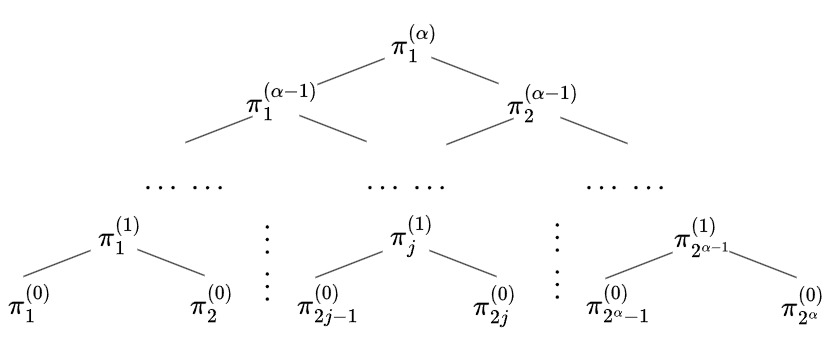}
\caption{The relation between $\pi_j^{(k)}$, which is a binary tree.}
\label{fig:tree_structure_of_policy}
\end{figure}

From Lemmas \ref{lem::CombRN}, \ref{lem::DeepenRN}, and \ref{lem::DeepenRN4Activate}, we are able to construct an appropriate residual network to substitute the maximum operation. Consequently, we can determine the approximation error for the supervised loss with reward function and maximum operation, given a suitable residual network.
\begin{prop}[approximation error for supervised loss with reward function] \label{prop::ApproxErr4Max}
    For given integer $k\in\{0,1,\ldots, \alpha\}$, $j\in \{1,\ldots, 2^{\alpha-k}\}$, there is integer $D\in \sN^+$ and an residual network $f(s, \tilde{\theta}^{k}_{j})$ with size $(d,6^{k}m,D,k+1)$ such that
    \begin{equation*}
        \mathbb{E}_{s \sim  \fD}\left(f(s, \tilde{\theta}^{k}_{j})-r(s,\pi^{(k)}_{j}(s))\right)^2 \le 5^{k}\frac{16\sum_{i=(j-1)2^k+1}^{j2^{k}}\norm{r(\cdot,i)}_{\fB}^2}{m}
    \end{equation*}
    with $\norm{\tilde{\theta}^{(k)}_j}_{\fP}\le (7/2)^{k}\sum_{i=(j-1)2^{k}+1}^{j2^{k}}12\norm{r(\cdot,i)}_{\fB}.$
\end{prop}

By integrating the last proposition with the effective solution provided in \eqref{eq::EffectiveSol}, we are able to estimate the approximation error for supervised loss with effective solution.
\begin{coro}[approximation error for supervised loss with effective solution] \label{coro::ApproxErr4Sing}
    Suppose that Assumption~\ref{assump::RewardFunctionBarron} holds. Then for all $a\in A$ and $\gamma\in [0,1)$, there is integer $D_0\in \sN^+$ and an residual network $f(s, \tilde{\theta}(a))$ with size $(d,(6^{\alpha}+1)m,D_0,\alpha+1)$ such that
    \begin{equation*}
        \mathbb{E}_{s \sim  \fD}\left(f(s, \tilde{\theta}(a))-f^*(s,a)\right)^2 \le \frac{\operatorname{Poly}(\Abs{A})}{(1-\gamma)^2 m}\norm{r}_{\fB}^2
    \end{equation*}
    with $\norm{\tilde{\theta}(a)}_{\fP}\le \frac{\operatorname{Poly}(\Abs{A})}{1-\gamma}\norm{r}_{\fB}.$    
\end{coro}
\begin{remark}
    Notice that $6^\alpha < \Abs{A}^3$, so network width is less than $(\Abs{A}^3+1)m$.
\end{remark}

The the Bellman effective loss is bounded by a combination of the supervised loss of the effective solution. This leads us to the following proposition.
\begin{prop}[approximation error for Bellman effective loss] \label{prop::ApproxErr4Sum}
     Suppose that Assumption~\ref{assump::RewardFunctionBarron} holds. The for all $a\in A$ and $\gamma\in (0,1)$, there are integer $D_0\in \sN^+$,  residual network $f(s, \tilde{\theta}(a))$ with size $(d,(6^{\alpha}+1)m,D_0,\alpha+1)$ and $\Tilde{\Theta}=\{\tilde{\theta}(a)\}_{a\in A}$ such that
    \begin{equation*}
        \begin{aligned}
            \tilde{\fR}_{\fD}(\Tilde{\Theta}) \le \frac{\operatorname{Poly}(\Abs{A})}{(1-\gamma)^2 m}\norm{r}_{\fB}^2
        \end{aligned}
    \end{equation*}
    with $\norm{\tilde{\Theta}}_{\fP} \le \frac{\operatorname{Poly}(\Abs{A})}{1-\gamma}\norm{r}_{\fB}.$ 
\end{prop}

Thanks to the Lipschitz property of the transition function given in Assumption \ref{assum::SemiGroup}, the Bellman optimal loss is bounded by a combination of the Bellman effective loss and an additional term associated with the time step of discretization.
\begin{thm}[approximation error for Bellman optimal loss]\label{thm::apprErrorOrigin}
    Suppose that Assumption~ \ref{assum::SemiGroup} and \ref{assump::RewardFunctionBarron} hold. Then for all $a\in A$ and $\gamma\in (0,1)$, there are integer $D_0\in \sN^+$,  residual network $f(s, \tilde{\theta}(a))$ with size $(d,(6^{\alpha}+1)m,D_0,\alpha+1)$ and $\Tilde{\Theta}=\{\tilde{\theta}(a)\}_{a\in A}$ such that
    \begin{equation*}
        \begin{aligned}
            \fR_{\fD}(\tilde{\Theta}) & \leq  \frac{\operatorname{Poly}(\Abs{A})}{(1-\gamma)^2} \left( \frac{1}{m} + (\Delta t)^2  \right) \norm{r}_{\fB}^2.
        \end{aligned}
    \end{equation*}
    with $\norm{\tilde{\Theta}}_{\fP} \leq \frac{\operatorname{Poly}(\Abs{A})}{1-\gamma}\norm{r}_{\fB}.$
\end{thm}

To obtain the \emph{a posteriori} generalization bound, it is necessary to estimate the Radema-cher complexity of the function space associated with residual networks and maximum operation, which we can employ the same methodology as in Proposition \ref{prop::ApproxErr4Max}. By integrating Lemma \ref{lem::RNSpace4Max}, Lemma \ref{lem::Rad}, and Theorem \ref{thm::RadeandGenErr}, we arrive at the following result.
\begin{thm}[a posterior generalization bound for Bellman optimal loss] \label{thm::PosterGenBoun}
    Suppose that Assumption~\ref{assump::RewardFunctionBarron} holds. For any $\gamma \in (0,1)$ and $\delta>0$, with probability $1-\delta$ over the choice of $n$ i.i.d. samples from $(\fD \times \fU)$, donated as $\fS=\{s_i, a_i, s_i', r\}_{i=1}^{n}$, and $f(s,\theta(a))$ be the residual network defined in \eqref{eq::Def4RN} with $\Theta = \{\theta(a)\}_{a \in A}$, we have that 
    \begin{equation*}
        \Abs{\fR_{\fD}(\Theta)-\fR_{\fS}(\Theta)}\le \frac{\operatorname{Poly}(\Abs{A}, \ln d, \Abs{\ln \delta})}{\sqrt{n}}(\norm{\Theta}_{\fP}+1)^2 \ln \norm{\Theta}_{\fP}.
    \end{equation*}
\end{thm}

We are now prepared to prove the final result. Let's consider the following decomposition:
\begin{equation}\label{eq::LossDecom}
    \fR_{\fD}(\hat{\Theta})=\fR_{\fD}(\tilde{\Theta})+[\fR_{\fD}(\hat{\Theta})-\fJ_{S,\lambda}(\hat{\Theta})]+[\fJ_{S,\lambda}(\hat{\Theta})-\fJ_{S,\lambda}(\tilde{\Theta})]+[\fJ_{S,\lambda}(\tilde{\Theta})-\fR_{\fD}(\tilde{\Theta})].
\end{equation}
Here, $\hat{\Theta}$ represents the optimal solution of the regularized loss function \eqref{eq::RegLoss}, and $\tilde{\Theta}$ corresponds to the residual network given in Theorem \ref{thm::apprErrorOrigin}. From Theorem \ref{thm::apprErrorOrigin}, we can derive the bound of $\fR_{\fD}(\tilde{\Theta})$. According to Theorem \ref{thm::PosterGenBoun}, both $\fR_{\fD}(\hat{\Theta})-\fJ_{S,\lambda}(\hat{\Theta})$ and $\fJ_{S,\lambda}(\tilde{\Theta})-\fR_{\fD}(\tilde{\Theta})$ are bounded with high probability. Furthermore, from the definition, $\fJ_{S,\lambda}(\hat{\Theta})-\fJ_{S,\lambda}(\tilde{\Theta}) \leq 0$. Substituting all of the above into \eqref{eq::LossDecom} yields the \emph{a priori} estimates in Theorem \ref{thm::AprioriGenErrBoun}.

\section{Proof of Theorems}\label{sec::ProofThm}

\subsection{Lemmas for the operations of residual network}

In this section, we present three straightforward lemmas associated with the construction of the maximum operation.

\begin{lem}[combination of residual network]\label{lem::CombRN}
    For two residual networks $f(x, \theta_1)$ and $f(x, \theta_2)$ with size $(d,m_1,D_1,L)$ and $(d,m_2,D_2,L)$ respectively, there exists an residual network $f(x, \theta_3)$ with size $(d,m_1+m_2,D_1+D_2,L)$ such that
    \begin{equation*}
        f(x, \theta_3)=f(x, \theta_1)+f(x, \theta_2),
    \end{equation*}
    and that $\norm{\theta_3}_{\fP}=\norm{\theta_1}_{\fP}+\norm{\theta_2}_{\fP}$.
\end{lem}
\begin{proof}
    By setting
    \begin{align*}
        f(x, \theta_1)&=u_1^{\T}h_1^{[L]}, \ h_1^{[l]}=h_1^{[l-1]}+U_1^{[l]}\sigma(W_1^{[l]}h_1^{[l-1]})\ \text{ for }\ l=\{1,\ldots,L\},\ h_1^{[0]}=V_1 x,\\
        f(x, \theta_2)&=u_2^{\T}h_2^{[L]}, \ h_2^{[l]}=h_2^{[l-1]}+U_2^{[l]}\sigma(W_2^{[l]}h_2^{[l-1]})\ \text{ for }\ l=\{1,\ldots,L\},\ h_2^{[0]}=V_2 x.
    \end{align*}
    Construct $u_3^{\T}=\{u_1^{\T}, u_2^{\T}\}\in \sR^{1\times (D_1+D_2)}$, $V_3=\left[ \begin{array}{c}
         V_1   \\
        V_2
    \end{array}\right]\in \sR^{(D_1+D_2)\times d}$, $W_3^{[l]}=\left[ \begin{array}{cc}
         W_1^{[l]} & 0  \\
          0 & W_2^{[l]}
    \end{array}\right]\in \sR^{(m_1+m_2)\times (D_1+D_2)}$ and $U_3^{[l]}=\left[ \begin{array}{cc}
         U_1^{[l]} & 0  \\
          0 & U_2^{[l]}
    \end{array}\right]\in \sR^{(D_1+D_2)\times (m_1+m_2)}$ for $l\in \{1,\ldots ,L\}$ with $\theta_3=\operatorname{vec}\{u_3, V_3, \{W_3^{[l]}\}_{l=1}^L, \{U_3^{[l]}\}_{l=1}^L\}$. It easy to verify  $f(x, \theta_3)=f(x, \theta_1)+f(x, \theta_2)$ and that $\norm{\theta_3}_{\fP}=\norm{\theta_1}_{\fP}+\norm{\theta_2}_{\fP}$ due to that $V_3 x= \left[ \begin{array}{cc}
         V_1 x\\
         V_2 x
        \end{array}\right]$ and $W_3^{[l]}$, $U_3^{[l]}$ are partitioned matrix. 
\end{proof}
\begin{lem}[deepen residual network with extra layer]\label{lem::DeepenRN}
    Given integer $L\in \sN^+$, for residual network $f(x, \theta_1)$ with size $(d,m,D,L)$ , there exists residual network $f(x, \theta_2)$ with size $(d,m,D+1,L+1)$ such that $f(x, \theta_2)=f(x, \theta_1)$ and $\norm{\theta_2}_{\fP} = \norm{\theta_1}_{\fP}$
\end{lem}
\begin{proof}
    Let $f(x, \theta_1)$ be
    \begin{equation*}
        f(s,\theta_1)=u_1^{\T}h_1^{[L]}, \ h_1^{[l]}=h_1^{[l-1]}+U_1^{[l]}\sigma(W_1^{[l]}h_1^{[l-1]})\ \text{ for }\ l=\{1,\ldots,L\},\ h_1^{[0]}=V_1 x.
    \end{equation*}
    Construct $V_2=\left[ \begin{array}{c}
         V_1   \\
         0
    \end{array}\right]\in \sR^{(D+1)\times d}$ and $W_2^{[l]}=\left[ \begin{array}{cc}
         W_1^{[l]} & 0  \\
        \end{array}\right]\in \sR^{m\times (D+1)}$ and $U_2^{[l]}=\left[ \begin{array}{cc}
         U_1^{[l]}   \\
          0 
    \end{array}\right]\in \sR^{(D+1)\times m}$ for $l\in \{1,\ldots ,L\}$, we see that $h_2^{[l]}=\left[ \begin{array}{c}
         h_1^{[l]}   \\
         0
    \end{array}\right]\in \sR^{D+1}$ for $l\in\{1,\ldots,L\}$. Then let
    \begin{equation*}
        W_2^{[L+1]}=\left[ \begin{array}{ccccc}
         0   
    \end{array}\right]\in \sR^{m\times (D+1)}, U_2^{[L+1]}=\left[ \begin{array}{ccccc}
         0 
    \end{array}\right]\in \sR^{(D+1)\times m}.
    \end{equation*}
    Hence 
    \begin{equation*}
        h_2^{[L+1]}=h_2^{[L]}+U_2^{[L+1]}\sigma(W_2^{[L+1]}h_2^{[L]})=\left[ \begin{array}{c}
         h_1^{[L]}   \\
         0
    \end{array}\right]\in \sR^{D+1}.
    \end{equation*}
    Thus by letting $u_2^{\T}=\{u_1^{\T},0\}\in \sR^{1\times (D+1)}$, we see that
    \begin{equation*}
        f(x, \theta_2)=u_2^{\T}h_2^{[L+1]}=u_1^{\T}h_1^{[L]}=f(x, \theta_1).
    \end{equation*}
    Moreover, 
    \begin{equation*}
        \Abs{u_2^{\T}}\left(I+3\Abs{U_2^{[L+1]}}\Abs{W_2^{[L+1]}}\right)=\Abs{\{u_1^{\T},0\}},
    \end{equation*}
    and this implies that $\norm{\theta_2}_{\fP}=\norm{\theta_1}_{\fP}$.
\end{proof}

\begin{lem}[deepen residual network with activation function]\label{lem::DeepenRN4Activate}
    Given integer $L\in \sN^+$, for residual network $f(x, \theta_1)$ with size $(d,m,D,L)$ , there exists residual network $f(x, \theta_2)$ with size $(d,m,D+1,L+1)$ such that $f(x, \theta_2)=\sigma(f(x, \theta_1))$ and that $\norm{\theta_2}_{\fP}=3\norm{\theta_1}_{\fP}$
\end{lem}
\begin{proof}
    Let $f(x, \theta_1)$ be
    \begin{equation*}
        f(x, \theta_1)=u_1^{\T}h_1^{[L]}, \ h_1^{[l]}=h_1^{[l-1]}+U_1^{[l]}\sigma(W_1^{[l]}h_1^{[l-1]})\ \text{ for }\ l=\{1,\ldots,L\},\ h_1^{[0]}=V_1 x.
    \end{equation*}
    Construct $V_2=\left[ \begin{array}{c}
         V_1   \\
         0
    \end{array}\right]\in \sR^{(D+1)\times d}$ and $W_2^{[l]}=\left[ \begin{array}{cc}
         W_1^{[l]} & 0  \\
        \end{array}\right]\in \sR^{m\times (D+1)}$ and $U_2^{[l]}=\left[ \begin{array}{cc}
         U_1^{[l]}   \\
          0 
    \end{array}\right]\in \sR^{(D+1)\times m}$ for $l\in \{1,\ldots ,L\}$, we see that $h_2^{[l]}=\left[ \begin{array}{c}
         h_1^{[l]}   \\
         0
    \end{array}\right]\in \sR^{D+1}$ for $l\in\{1,\ldots,L\}$. Then let
    \begin{equation*}
        W_2^{[L+1]}=\left[ \begin{array}{ccccc}
         0 & 0  \\
         u_1^{\T} & 0  
    \end{array}\right]\in \sR^{m\times (D+1)}, U_2^{[L+1]}=\left[ \begin{array}{ccccc}
         0 & \ldots & 0  \\
         0 & \ldots  & 1\\
    \end{array}\right]\in \sR^{(D+1)\times m}.
    \end{equation*}
    Hence 
    \begin{equation*}
        h_2^{[L+1]}=h_2^{[L]}+U_2^{[L+1]}\sigma(W_2^{[L+1]}h_2^{[L]})=\left[ \begin{array}{c}
         h_1^{[L]}   \\
         \sigma(u_1^{\T}h_1^{[L]})
    \end{array}\right]\in \sR^{D+1}.
    \end{equation*}
    Thus by letting $u_2^{\T}=\{0,0,\ldots,1\}\in \sR^{1\times (D+1)}$, we see that
    \begin{equation*}
        f(x, \theta_2)=u_2^{\T}h_2^{[L+1]}=\sigma(u_1^{\T}h_1^{[L]})=\sigma(f(x, \theta_1)).
    \end{equation*}
     Moreover, 
    \begin{equation*}
        \Abs{u_2^{\T}}\left(I+3\Abs{U_2^{[L+1]}}\Abs{W_2^{[L+1]}}\right)=\Abs{\{3u_1^{\T},1\}},\ \Abs{V_2} x=\left[ \begin{array}{c}
         \Abs{V_1} x   \\
         0
    \end{array}\right]
    \end{equation*}
    and this implies that $\norm{\theta_2}_{\fP}=3\norm{\theta_1}_{\fP}$.
\end{proof}

\subsection{Approximation error for the Bellman effective loss minimization problem}

We first prove two lemmas: 1) the existence lemma of effective solution and 2) the decomposition lemma for the equation \eqref{eq::maxOperationPolicy}.

\begin{lem}[existence of effective solution]\label{lem::ExistofMod}
    There exists a function $f^*(\cdot,\cdot)\in C( S\times A)$ which satisfies the Bellman effective equation \eqref{eq::ModEq} and it can be represented as
    \begin{equation*}
        f^*(s,a)=r(s,a)+\frac{\gamma}{1-\gamma}\max_{a \in A}r(s,a).
    \end{equation*}
    $f^*(\cdot,\cdot)$ is the \textbf{effective solution}.
\end{lem}
\begin{proof}
    If such $f^*$ exists, it should satisfy
    \begin{equation}\label{eq::Invari}
        \begin{aligned}
            f^*(s,a)&=r(s,a)+\gamma\max_{a'\in A}f^*(s,a');\\
            f^*(s,\Tilde{a}) &= r(s,\tilde{a})+\gamma\max_{a'\in A}f^*(s,a').
        \end{aligned}     
    \end{equation}
    Thus 
    \begin{equation*}
        f^*(s,a)-f^*(s,\Tilde{a})=r(s,a)-r(s,\Tilde{a}).
    \end{equation*}
    For $s$ fixed, we define a policy as $\pi(s)=\arg\max_{a \in A} f^*(s,a)$, we have
    \begin{equation}\label{eq::EqofMax}
        f^*(s,\pi(s))=r(s,\pi(s))+\gamma f^*(s,\pi(s)) =\frac{1}{1-\gamma}r(s,\pi(s))
    \end{equation}
    Equations \eqref{eq::Invari} and \eqref{eq::EqofMax} together lead to 
    \begin{equation}\label{eq::Solf*}
        f^*(s,a)=r(s,a)-r(s,\pi(s))+\frac{1}{1-\gamma}r(s,\pi(s))=r(s,a)+\frac{\gamma}{1-\gamma}r(s,\pi(s)).
    \end{equation}
    Conversely, it can be easily checked that $f^*$ given by equation \eqref{eq::Solf*} is a solution to the Bellman effective equation \eqref{eq::ModEq} and $\pi(s) = \arg\max_{a \in A}r(s,a)$.

    Since for each $a\in A$, $r(s,a)$ is continuous and $r(s,\pi(s))=\max_{a\in A}r(s,a)$, we have $r(s,\pi(s))$ is continuous, thus function $f^*(s,a)$ is continuous and hence the proof is completed.
\end{proof}

\begin{lem}[decomposition of maximum operation]\label{lem::FuncDecomp}
    It satisfies that for $k\in\{1,\ldots, \alpha\}$, $j\in \{1,\ldots, 2^{\alpha-k}\}$
    \begin{equation*}
        \begin{aligned}
            r(s,\pi^{(k)}_{j}(s))
            &=\max\{r(s,\pi^{(k-1)}_{2j-1}(s)),r(s,\pi^{(k-1)}_{2j}(s))\} \\
            &=\frac{1}{2}\left(r(s,\pi^{(k-1)}_{2j-1}(s))+r(s,\pi^{(k-1)}_{2j}(s))+\Abs{r(s,\pi^{(k-1)}_{2j-1}(s))-r(s,\pi^{(k-1)}_{2j}(s))}\right).
        \end{aligned}
    \end{equation*}
\end{lem}

\begin{proof}
    This comes from the definition and that $\max\{a,b\}=\frac{1}{2}(a+b+\Abs{a-b})$
\end{proof}

Now we start to prove Proposition \ref{prop::ApproxErr4Max} and its Corollary \ref{coro::ApproxErr4Sing}.

\begin{proof}[Proof of Proposition \ref{prop::ApproxErr4Max}]
    This is proved by method of induction. And for simplicity, we only prove the case of $j=1.$
    
    \textbf{Step $0$: } For $k=0$, we can refer to~\cite{ma2019priori} theorem 2.7, which indicates that for each $a\in A$, there is an residual network $f(\cdot, \tilde{\theta}(a))$ of size $(d,m,d+1,1)$ with $\tilde{\theta
        }(a)=\operatorname{vec}\{\ \tilde{u}_a,\tilde{v}_a, \tilde{W}_a,\tilde{U}_a\}$,   $\norm{\tilde{\theta}(a)}_{\fP}\le 12\norm{r(\cdot,a)}_{\fB}$ and 
        \begin{equation}\label{eq::Approx4TwoLay}
            \mathbb{E}_{s \sim  \fD}\left(f(s, \tilde{\theta}(a))-r(s,a)\right)^2\le\frac{16\norm{r(\cdot,a)}_{\fB}^2}{m}.
        \end{equation}
    
    \textbf{Step $1$: }For $k=1$, consider $\pi^{(1)}_{1}(s)$ as an example (the case of $\pi^{(1)}_{j}(s)$ can be proved similarly),  by Proposition~\ref{lem::FuncDecomp}, we see
        \begin{equation*}
            \begin{aligned}
                r(s,\pi^{(1)}_{1}(s))&=\frac{1}{2}\left(r(s,1)+r(s,2)+\Abs{r(s,1)-r(s,2)}\right)\\
                &=\frac{1}{2}\left(r(s,1)+r(s,2)+\sigma(r(s,1)-r(s,2))+\sigma(r(s,2)-r(s,1))\right)
            \end{aligned}
        \end{equation*}
        By the first induction step, there are residual networks $f(s, \tilde{\theta}_1)$, $f(s, \tilde{\theta}_2)$ of size $(d,m,d+1,1)$ such that equation~\eqref{eq::Approx4TwoLay} holds. And by Lemma~\ref{lem::DeepenRN}, there are residual networks $f(s, \breve{\theta}_1)$ and $f(s, \breve{\theta}_2)$ of size $(d,m,d+2,2)$ such that
        \begin{equation}\label{eq::Deepen4TwoLay}
            \begin{aligned}
                f(s, \breve{\theta}_1)=f(s, \tilde{\theta}_1),\ \norm{\breve{\theta}_1}_{\fP}=\norm{\tilde{\theta}_1}_{\fP}\\
                f(s, \breve{\theta}_2)=f(s, \tilde{\theta}_2),\ \norm{\breve{\theta}_2}_{\fP}=\norm{\tilde{\theta}_2}_{\fP}\\
            \end{aligned}
        \end{equation}

        Then consider the residual network representation of $\sigma\left(f(s, \tilde{\theta}_1)-f(s, \tilde{\theta}_2)\right)$.
        By Lem-ma~\ref{lem::CombRN}, there is an residual network $f(s,\bar{\theta}_1)$ of size $(d,2m,2d+2,1)$ such that
        \begin{equation*}
            f(s,\bar{\theta}_1)= f(s, \tilde{\theta}_1)-f(s, \tilde{\theta}_2),\ \norm{\bar{\theta}_1}_{\fP}=\norm{\tilde{\theta}_1}_{\fP}+\norm{\tilde{\theta}_2}_{\fP}.
        \end{equation*}
         And by Lemma~\ref{lem::DeepenRN4Activate}, there is residual network $f(s,\check{\theta}_1)$ of size $(d,2m,2d+3,2)$ such that
        \begin{equation}\label{eq::Deepen4PosiMinus}
            f(s,\check{\theta}_1)= \sigma\left(f(s,\bar{\theta}_1)\right),\ \norm{\check{\theta}_1}_{\fP}=3\norm{\bar{\theta}}_{\fP}=3(\norm{\tilde{\theta}_1}_{\fP}+\norm{\tilde{\theta}_2}_{\fP}).
        \end{equation}
         This argument also holds for $\sigma\left(f(s, \tilde{\theta}_2)-f(s, \tilde{\theta}_1)\right)$ with residual network $f(s,\check{\theta}_2)$ of size $(d, \\ 2m, 2d+3,2)$. Moreover, notice that $\sigma$ is Lipschitz continuous with constant 1, hence
        \begin{equation}\label{eq::Approx4ThreeLayPosiMinus}
            \begin{aligned}
                & \mathbb{E}_{s \sim \fD}\left(f(s,\check{\theta}_1)-\sigma(r(s,1)-r(s,2))\right)^2 \\
                = & \mathbb{E}_{s \sim \fD}\left(\sigma\left(f(s, \tilde{\theta}_1)-f(s, \tilde{\theta}_2)\right)-\sigma \left(r(s,1)-r(s,2) \right) \right)^2\\
                \le & 2\left(\mathbb{E}_{s \sim \fD}\left(f(s, \tilde{\theta}_1)-r(s,1)\right)^2+\mathbb{E}_{s \sim \fD}\left(f(s, \tilde{\theta}_2)-r(s,2)\right)^2\right)\\
                \le &32\frac{\norm{r(\cdot,1)}_{\fB}^2+\norm{r(\cdot,2)}_{\fB}^2}{m}.
            \end{aligned}
        \end{equation}
        By Lemma~\ref{lem::CombRN} there is residual network $f(s,\tilde{\theta}^{(1)})$ of size $(d,6m,6d+10,2)$ such that
        \begin{equation} \label{eq::funcDecomExample}
            f(s,\tilde{\theta}^{(1)})=
            \frac{1}{2}\left(f(s, \breve{\theta}_1)+f(s, \breve{\theta}_2)+f(s,\check{\theta}_1)+f(s,\check{\theta}_2)\right).
        \end{equation}
        Combining equations~\eqref{eq::Deepen4TwoLay}, ~\eqref{eq::Deepen4PosiMinus}, ~\eqref{eq::Approx4ThreeLayPosiMinus} and ~\eqref{eq::funcDecomExample} and using Jensen's inequality
        \begin{equation*}
            \begin{aligned}
                & \mathbb{E}_{s \sim  \fD}\left(f(s,\tilde{\theta}^{(1)})-r(s,\pi^{(1)}_{1}(s))\right)^2 \\
                \leq & 4\mathbb{E}_{s \sim  \fD} \left(\frac{1}{2}(f(s, \breve{\theta}_1) - r(s,1)) \right)^2 + 4\mathbb{E}_{s \sim  \fD} \left(\frac{1}{2}(f(s, \breve{\theta}_2) - r(s,2)) \right)^2 \\
                &~~+ 4\mathbb{E}_{s \sim  \fD} \left(\frac{1}{2}(f(s,\check{\theta}_1) - \sigma(r(s,1)-r(s,2)))\right)^2 \\
                &~~+ 4\mathbb{E}_{s \sim  \fD} \left(\frac{1}{2}(f(s,\check{\theta}_2) - \sigma(r(s,2)-r(s,1)))\right)^2 \\
                \leq &  80\frac{(\norm{r(\cdot,1)}_{\fB}^2+\norm{r(\cdot,2)}_{\fB}^2)}{m},
            \end{aligned}
        \end{equation*}
        and we also have 
        \begin{equation*}
            \begin{aligned}
                \norm{\tilde{\theta}^{(1)}}_{\fP}
                & = \frac{1}{2} \left(\norm{\tilde{\theta}_1}_{\fP}+\norm{\tilde{\theta}_2}_{\fP} + 3(\norm{\tilde{\theta}_1}_{\fP}+\norm{\tilde{\theta}_2}_{\fP}) + 3(\norm{\tilde{\theta}_2}_{\fP}+\norm{\tilde{\theta}_1}_{\fP})\right) \\
                & \leq 42(\norm{r(\cdot,1)}_{\fB}+\norm{r(\cdot,2)}_{\fB}).
            \end{aligned}
        \end{equation*}
        \textbf{Step $k$: } Now, suppose that for $k\ge 2$, there is residual network $f(\cdot, \tilde{\theta}^{(k)})$ of size $(d,p_km,q_kd+s_k,k+1)$ with $p_k=6p_{k-1}$, $q_k=6q_{k-1}$, $s_k=6s_{k-1}+4$,  $p_0=q_0=s_0=1$ such that 
        \begin{equation*}
            \mathbb{E}_{s \sim  \fD}\left(f(s, \tilde{\theta}^{(k)})-r(s,\pi^{(k)}_{j}(s))\right)^2\le 5^{k}\frac{16\sum_{i=1}^{2^k}\norm{r(\cdot,i)}_{\fB}^2}{m}
        \end{equation*}
        with $\norm{\tilde{\theta}^{(k)}}_{\fP}\le (7/2)^{k}\sum_{i=1}^{2^k}12\norm{r(\cdot,i)}_{\fB}.$

        Consider the case for $\pi^{(k+1)}_{1}(s)$: since 
        \begin{equation*}
            \begin{aligned}
                r(s,\pi^{(k+1)}_{1}(s))
            & =\frac{1}{2}\bigl(r(s,\pi^{(k)}_{1}(s))+r(s,\pi^{(k)}_{2}(s)) + \sigma\bigl(r(s,\pi^{(k)}_{1}(s))-r(s,\pi^{(k)}_{2}(s))\bigr) \\
            & ~~~~ +\sigma\bigl(r(s,\pi^{(k)}_{2}(s))-r(s,\pi^{(k)}_{1}(s))\bigr)\bigr).
            \end{aligned}
        \end{equation*}
        By previous induction, there are residual networks $f(s,\tilde{\theta}^{(k)}_1)$ and $f(s,\tilde{\theta}^{(k)}_2)$ of size $(d, \\ p_{k}m, q_{k}d+s_{k},k+1)$ such that 
        \begin{equation*}
            \begin{aligned}
                \mathbb{E}_{s \sim  \fD}\left(f(s,\tilde{\theta}^{(k)}_{1})-r(s,\pi^{(k)}_{1}(s))\right)^2&\le 5^{k}\frac{16\sum_{i=1}^{2^k}\norm{r(\cdot,i)}_{\fB}^2}{m},\\
                \mathbb{E}_{s \sim  \fD}\left(f(s,\tilde{\theta}^{(k)}_{2})-r(s,\pi^{(k)}_{2}(s))\right)^2&\le 5^{k}\frac{16\sum_{i=2^k+1}^{2^{k+1}}\norm{r(\cdot,i)}_{\fB}^2}{m}.
            \end{aligned}
        \end{equation*}
        And by Lemma~\ref{lem::DeepenRN}, there are residual networks $f(s,\breve{\theta}^{(k+1)}_{1})$, $f(s,\breve{\theta}^{(k+1)}_{2})$ of size $(d,p_{k}m,\\ q_{k}d+s_{k}+1,k+2)$ such that
        \begin{equation}\label{eq::Deepen4TwoLayDeep}
            \begin{aligned}
                f(s,\breve{\theta}^{(k+1)}_{1})=f(s,\tilde{\theta}^{(k)}_{1})&,\ \norm{\breve{\theta}^{(k+1)}_{1}}_{\fP}=\norm{\tilde{\theta}^{(k)}_{1}}_{\fP},\\
                f(s,\breve{\theta}^{(k+1)}_{2})=f(s,\tilde{\theta}^{(k)}_{2})&,\ \norm{\breve{\theta}^{(k+1)}_{2}}_{\fP}=\norm{\tilde{\theta}^{(k)}_{2}}_{\fP}.
            \end{aligned}
        \end{equation}
        Still consider the residual network representation of $\sigma\left(f(s,\tilde{\theta}^{(k)}_{1})-f(s,\tilde{\theta}^{(k)}_{2})\right)$. By Lemma~\ref{lem::CombRN} and~\ref{lem::DeepenRN4Activate}, there is residual network $f(s,\bar{\theta}_1)$ of size $(d,2p_{k}m,2q_{k}d+2s_{k},k+1)$ such that
        \begin{equation*}
            f(s,\bar{\theta}^{(k)}_{1})= f(s,\tilde{\theta}^{(k)}_{1})-f(s,\tilde{\theta}^{(k)}_{2}),\ \norm{\bar{\theta}^{(k)}_{1}}_{\fP}=\norm{\tilde{\theta}^{(k)}_{1}}_{\fP}+\norm{\tilde{\theta}^{(k)}_{2}}_{\fP}.
        \end{equation*}
        And by Lemma~\ref{lem::DeepenRN4Activate}, there is residual network $f(s,\check{\theta}^{(k+1)}_{1})$ of size $(d,2p_{k}m,2q_{k}d+2s_{k}+1,k+2)$ such that
        \begin{equation}\label{eq::Deepen4PosiMinusDeep}
            f(s,\check{\theta}^{(k+1)}_{1})= \sigma\left(f(s,\bar{\theta}^{(k)}_{1})\right),\ \norm{\check{\theta}^{(k+1)}_{1}}_{\fP}=3\norm{\bar{\theta}^{(k)}_{1}}_{\fP}=3(\norm{\tilde{\theta}^{(k)}_{1}}_{\fP}+\norm{\tilde{\theta}^{(k)}_{2}}_{\fP}).
        \end{equation}
        This argument holds for $\sigma\left(f(s,\tilde{\theta}^{(k)}_{2})-f(s,\tilde{\theta}^{(k)}_{1})\right)$ with residual network $f(s,\check{\theta}^{(k+1)}_{2})$ of size $(d,2p_{k}m,2q_{k}d+2s_{k}+1,k+2)$ also. Moreover, notice that $\sigma$ is Lipschitz continuous with constant 1, hence
        \begin{equation}\label{eq::Approx4ThreeLayPosiMinusDeep}
            \begin{aligned}
                & \mathbb{E}_{s\sim \fD}\left(f(s,\check{\theta}^{(k+1)}_{1})-\sigma\left(r(s,\pi^{(k)}_{1}(s))-r(s,\pi^{(k-1)}_{2}(s))\right)\right)^2\\
                = & \mathbb{E}_{s \sim \fD}\left(\sigma\left(f(s,\tilde{\theta}^{(k)}_{1})-f(s,\tilde{\theta}^{(k)}_{2})\right)-\sigma\left(r(s,\pi^{(k)}_{1}(s))-r(s,\pi^{(k)}_{2}(s))\right)\right)^2\\
                \le & 2\mathbb{E}_{s \sim \fD}\left(f(s,\tilde{\theta}^{(k)}_{1})-r(s,\pi^{(k)}_{1}(s))\right)^2+2\mathbb{E}_{s \sim S}\left(f(s,\tilde{\theta}^{(k)}_{2})-r(s,\pi^{(k)}_{2}(s))\right)^2\\
                \le & 5^{k}\frac{32\sum_{i=1}^{2^{k+1}}\norm{r(\cdot,i)}_{\fB}^2}{m}.
            \end{aligned}
        \end{equation}
         By Lemma~\ref{lem::CombRN} there is residual network $f(\cdot, \tilde{\theta}^{(k+1)})$ of size $(d,p_{k+1}m,q_{k+1}d+s_{k+1},k+2)$ with
         $p_{k+1}=6p_{k}$, $q_{k+1}=6q_{k}$, $s_{k+1}=6s_{k}+4$ such that
        \begin{equation*}
            f(s, \tilde{\theta}^{(k+1)})=
            \frac{1}{2}\left(f(s,\breve{\theta}^{(k+1)}_1)+f(s,\breve{\theta}^{(k+1)}_2)+f(s,\check{\theta}^{(k+1)}_{1})+f(s,\check{\theta}^{(k+1)}_{2})\right).
        \end{equation*}
        combining equations~\eqref{eq::Deepen4TwoLayDeep}, ~\eqref{eq::Deepen4PosiMinusDeep} and ~\eqref{eq::Approx4ThreeLayPosiMinusDeep}, and using Jensen's inequality
        \begin{equation*}
            \begin{aligned}
                &\mathbb{E}_{s \sim  \fD}\left(f(s, \tilde{\theta}^{(k+1)})-r(s,\pi^{(k+1)}_{1}(s))\right)^2\\
                \le & \mathbb{E}_{s \sim \fD}\left(f(s,\breve{\theta}^{(k+1)}_{1})-r(s,\pi^{(k)}_{1}(s))\right)^2+\mathbb{E}_{s \sim S}\left(f(s,\breve{\theta}^{(k+1)}_{2})-r(s,\pi^{(k)}_{2}(s))\right)^2 \\
                &~~~~~~+\mathbb{E}_{s \sim \fD}\left(f(s,\check{\theta}^{(k+1)}_{1})-\sigma\left(r(s,\pi^{(k)}_{1}(s))-r(s,\pi^{(k)}_{2}(s))\right)\right)^2\\
                &~~~~~~+\mathbb{E}_{s \sim \fD}\left(f(s,\check{\theta}^{(k+1)}_{2})-\sigma\left(r(s,\pi^{(k)}_{2}(s))-r(s,\pi^{(k)}_{1}(s))\right)\right)^2\\
                \le & 5^{k}\frac{16\sum_{i=1}^{2^{k}}\norm{r(\cdot,i)}_{\fB}^2}{m} + 5^{k}\frac{16\sum_{i=2^{k}+1}^{2^{k+1}}\norm{r(\cdot,i)}_{\fB}^2}{m} +4 \times 5^{k}\frac{16\sum_{i=1}^{2^{k+1}}\norm{r(\cdot,i)}_{\fB}^2}{m}\\
                = & 5^{k}\frac{16\sum_{i=1}^{2^{k+1}}\norm{r(\cdot,i)}_{\fB}^2}{m}.
            \end{aligned}
        \end{equation*}
        with 
        \begin{equation*}
            \begin{aligned}
                \norm{\tilde{\theta}^{(k+1)}}_{\fP}
                &= \frac{1}{2}(\norm{\breve{\theta}^{(k+1)}_{1}}_{\fP}+\norm{\breve{\theta}^{(k+1)}_{2}}_{\fP}+\norm{\check{\theta}^{(k+1)}_1}_{\fP}+\norm{\check{\theta}^{(k+1)}_{2}}_{\fP})\\
                &=\frac{7}{2}\norm{\tilde{\theta}^{(k)}_{1}}_{\fP}+\frac{7}{2}\norm{\tilde{\theta}^{(k)}_{2}}_{\fP}\le (7/2)^{k+1}\sum_{i=1}^{2^{k+1}}12\norm{r(\cdot,i)}_{\fB}.
            \end{aligned}
        \end{equation*}
        Hence completes the induction.
\end{proof}

\begin{proof}[Proof of Corollary \ref{coro::ApproxErr4Sing}]
    First of all, the concrete form of the approximation error bound we are trying to prove is
    \begin{equation*}
        \mathbb{E}_{s \sim  \fD}\left( f(s, \tilde{\theta}(a))-f^*(s,a)\right)^2 \le \frac{32}{(1-\gamma)^2}\Abs{A}^3\frac{\norm{r(\cdot,\cdot)}_{\fB}^2}{m}+\frac{32\norm{r(\cdot,a)}_{\fB}^2}{m},
    \end{equation*}
    with $\norm{\tilde{\theta}(a)}_{\fP}\le 12\norm{r(\cdot,a)}_{\fB} + \frac{12}{1-\gamma}\Abs{A}^{5/2}\norm{r(\cdot,\cdot)}_{\fB}.$ 

   For each $a\in A$, by~\cite{ma2019priori}, Theorem 2.7, there is an residual network $g(s,\theta(a))$ of size $(d,m,d+1,1)$ such that  
    \begin{equation}\label{eq::Approx4PartI}
        \mathbb{E}_{s \sim \fD}\left(g(s,\theta(a))-r(s,a)\right)^2\le\frac{16\norm{r(\cdot,a)}_{\fB}^2}{m},
    \end{equation}
    with $\norm{\theta(a)}_{\fP}\le 12\norm{r(\cdot,a)}_{\fB}$. By Lemma~\ref{lem::DeepenRN}, there is residual network $g_0(s,\breve{\theta}(a))$ of size $(d,m,d+\alpha+1,\alpha+1)$ such that
    \begin{equation*}
        g_0(s,\breve{\theta}(a))=g(s,\theta(a)),
    \end{equation*} 
    with $\norm{\breve{\theta}(a)}_{\fP}=\norm{\theta(a)}_{\fP}$. By Theorem~\ref{prop::ApproxErr4Max} and that $r(s,\pi(s))=r(s,\pi^{(\alpha)}_{1}(s))$, there are $D\in \sN^+$ and residual network $h(s,\tilde{\theta}^{(\alpha)})$ of size $(d,6^{\alpha}m,D,\alpha+1)$ such that
    \begin{equation}\label{eq::Approx4PartII}
        \mathbb{E}_{s \sim  \fD}\left(h(s,\tilde{\theta}^{(\alpha)})-r(s,\pi(s))\right)^2 \le 5^{\alpha}\frac{16\sum_{i=1}^{\Abs{A}}\norm{r(\cdot,i)}_{\fB}^2}{m},
    \end{equation}
    with $\norm{\tilde{\theta}^{(\alpha)}}_{\fP}\le (7/2)^{\alpha}\sum_{i=1}^{\Abs{A}}12\norm{r(\cdot,i)}_{\fB}$. Thus by Lemma~\ref{lem::CombRN}, there is residual network $f(s, \tilde{\theta}(a))$ of size $(d,(6^{\alpha}+1)m,D+d+\alpha+1,\alpha+1)$ such that
    \begin{equation}\label{eq::Decomp4ApproxRN}
        f(s, \tilde{\theta}(a))=g_0(s,\breve{\theta}(a))+\frac{\gamma}{1-\gamma}h(s,\tilde{\theta}^{(\alpha)}),
    \end{equation}
    with 
    \begin{equation*}
        \begin{aligned}
            \norm{\tilde{\theta}(a)}_{\fP}
            &=\norm{\breve{\theta}(a)}_{\fP}+\frac{\gamma}{1-\gamma}\norm{\tilde{\theta}^{(\alpha)}}_{\fP}\le 12\norm{r(\cdot,a)}_{\fB}+\frac{1}{1-\gamma}(7/2)^{\alpha}\sum\nolimits_{i=1}^{\Abs{A}}12\norm{r(\cdot,i)}_{\fB}\\
            & \leq 12\norm{r(\cdot,a)}_{\fB} + \frac{12}{1-\gamma}\Abs{A}^{5/2}\norm{r(\cdot,\cdot)}_{\fB}.
        \end{aligned}
    \end{equation*}
    Moreover, \eqref{eq::Solf*},  \eqref{eq::Approx4PartI} and~\eqref{eq::Approx4PartII} together lead to
    \begin{equation*}
        \begin{aligned}
            & \mathbb{E}_{s \sim  \fD}\left(f(s, \tilde{\theta}(a))-f^*(s,a)\right)^2 \\
            \leq & 2\mathbb{E}_{s \sim  \fD}\left(g_0(s,\breve{\theta}(a))-r(s,a)\right)^2 + 2\mathbb{E}_{s \sim  \fD}\left(\frac{\gamma}{1-\gamma} \left( h(s,\tilde{\theta}^{(\alpha)}) - r(s,\pi(s)) \right)\right)^2 \\
            \leq & \frac{32}{(1-\gamma)^2}\Abs{A}^3\frac{\norm{r(\cdot,\cdot)}_{\fB}^2}{m}+\frac{32\norm{r(\cdot,a)}_{\fB}^2}{m},
        \end{aligned}
    \end{equation*}
    where $5^{\alpha} \leq |A|^3$. This completes the proof.
\end{proof}

From Corollary \ref{coro::ApproxErr4Sing}, we can estimate the approximation error of Bellman effective loss.

\begin{proof}[Proof of Proposition \ref{prop::ApproxErr4Sum}]
    First of all, the concrete form of the approximation error bound we are trying to prove is
    \begin{equation*}
        \begin{aligned}
            \tilde{\fR}_{\fD}(\Tilde{\Theta}) \le \frac{1}{(1-\gamma)^2}(\Abs{A}+1)(\Abs{A}^4+1)\frac{32\norm{r(\cdot,\cdot)}_{\fB}^2}{m}.
        \end{aligned}
    \end{equation*}
    with $\norm{\tilde{\Theta}}_{\fP} \le\frac{12}{1-\gamma}(\Abs{A}^{7/2} + \Abs{A}^{1/2})\norm{r(\cdot,\cdot)}_{\fB}.$ 

    Insert effective solution \eqref{eq::Solf*} into Bellman effective loss we have
    \begin{equation}\label{eq::Deform}
        \begin{aligned}
            & f(s, \theta(a))-r(s,a)-\gamma \max_{a'\in A}f(s, \theta(a')) \\
            =& \left(f(s, \theta(a))-f^*(s,a)\right)-\gamma \left(\max_{a'\in A}f(s,\theta(a'))-\max_{a''\in A}f^*(s,a'')\right).
        \end{aligned}
    \end{equation}
    Note that for each $a\in A$ we have
    \begin{equation*}
        f(s, \theta(a))-\max_{a''\in A}f^*(s,a'')\le f(s, \theta(a))-f^*(s,a)\le \max_{a'\in A}\Abs{f(s,\theta(a'))-f^*(s,a')}.
    \end{equation*}
    So we have the following relationship 
    \begin{equation} \label{eq::MaxContro}
        \Abs{\max_{a'\in A}f(s, \theta(a'))-\max_{a''\in A}f^*(s,a'')}\le \max_{a'\in A}\Abs{f(s,\theta(a'))-f^*(s,a')}.
    \end{equation}

    From Corollary~\ref{coro::ApproxErr4Sing}, we have for each $a\in A$
    \begin{equation*}
         \mathbb{E}_{s\sim \fD}\left(f(s, \tilde{\theta}(a))-f^*(s,a)\right)^2\le \frac{1}{(1-\gamma)^2}\Abs{A}^3\frac{32\norm{r(\cdot,\cdot)}_{\fB}^2}{m}+\frac{32\norm{r(\cdot,a)}_{\fB}^2}{m},
    \end{equation*}
    this combing equations~\eqref{eq::Deform} and~\eqref{eq::MaxContro} leads to 
    \begin{equation*}
        \begin{aligned}
            \tilde{\fR}(\tilde{\Theta})&=\frac{1}{2}\mathbb{E}_{s\sim \fD, a \sim \fU}\left(\left(f(s, \theta(a))-f^*(s,a)\right)-\gamma \left(\max_{a'\in A}f(s, \theta(a'))-\max_{a''\in A}f^*(s,a'')\right)\right)^2\\
            &\le \frac{1}{|A|} \sum_{a\in A}\left(\mathbb{E}_{s\sim \fD}\left(f(s, \theta(a))-f^*(s,a)\right)^2+\Abs{A}\sum_{a\in A}\left(\mathbb{E}_{s\sim \fD}\left(f(s, \theta(a))-f^*(s,a)\right)^2\right)\right)\\
            & \leq \frac{1}{\Abs{A}}\left(\frac{1}{(1-\gamma)^2}\Abs{A}^4+1\right)\frac{32\norm{r(\cdot,\cdot)}_{\fB}^2}{m} + \Abs{A}\left(\frac{1}{(1-\gamma)^2}\Abs{A}^4+1\right)\frac{32\norm{r(\cdot,\cdot)}_{\fB}^2}{m} \\
            & \leq \frac{1}{(1-\gamma)^2}(\Abs{A}+1)(\Abs{A}^4+1)\frac{32\norm{r(\cdot,\cdot)}_{\fB}^2}{m}.
        \end{aligned}
    \end{equation*}
    Also, we have
    \begin{equation*}
        \begin{aligned}       
        \norm{\tilde{\Theta}}_{\fP} & = \sum_{a \in A}\norm{\tilde{\theta}(a)}_{\fP}\le \sum\nolimits_{i=1}^{\Abs{A}}12\norm{r(\cdot,i)}_{\fB}+\frac{12}{1-\gamma}\Abs{A}^{7/2}\norm{r(\cdot,\cdot)}_{\fB} \\
        & \leq \frac{12}{1-\gamma}(\Abs{A}^{7/2} + \Abs{A}^{1/2})\norm{r(\cdot,\cdot)}_{\fB}.
        \end{aligned}
    \end{equation*}
\end{proof}

\subsection{\emph{a priori} generalization error for Bellman optimal loss minimization problem}

Than-ks to the Lipschitz property of the transition function, as stated in Assumption \ref{assum::SemiGroup}, we have the following theorem.

\begin{proof}[Proof of Theorem \ref{thm::apprErrorOrigin}]
    First of all, the concrete form of the approximation error bound we are trying to prove is
    \begin{equation*}
        \begin{aligned}
            \fR_{\fD}(\tilde{\Theta}) & \leq \frac{1}{(1-\gamma)^2} \left( 256 \Abs{A}^{5} \frac{\norm{r(\cdot,\cdot)}_{\fB}^2}{m}  +   576 \gamma^2 C_T^2 (\Delta t)^2  \Abs{A}^{7} \norm{r(\cdot,\cdot)}^2_{\fB} \right).
        \end{aligned}
    \end{equation*}
    with $\norm{\tilde{\Theta}}_{\fP} \leq \frac{24}{1-\gamma}\Abs{A}^{7/2}\norm{r(\cdot,\cdot)}_{\fB}.$

    To get the relation between $\tilde{\fR}_{\fD}(\tilde{\Theta})$ and $\fR_{\fD}(\tilde{\Theta})$, we define 
    \begin{equation*}
        \Delta(s,a) = \gamma(\max_{a' \in A} f(s,\tilde{\theta}(a')) - \max_{a'' \in A} f(\tilde{g}(s,a, \Delta t),\tilde{\theta}(a''))).
    \end{equation*}
    Then we have the following relationship
    \begin{align*}
        \fR_{\fD}(\tilde{\Theta}) & = \frac{1}{2}\mathbb{E}_{s \sim \fD, a \sim \fU} \left(f(s, \tilde{\theta}(a))-r(s,a)-\gamma\max_{a'\in A}f(s,\tilde{\theta}(a')) + \Delta(s,a) \right)^2 \\
        & \leq \frac{1}{2}\mathbb{E}_{s \sim \fD, a \sim \fU} \left( 2\left(f(s, \tilde{\theta}(a))-r(s,a)-\gamma\max_{a'\in A}f(s,\tilde{\theta}(a')) \right)^2 + 2\Abs{\Delta(s,a)}^2 \right) \\
        & \leq  2\tilde{\fR}_{\fD}(\tilde{\Theta}) + \left(\sup_{s' \in S, a''\in A}\Abs{\Delta(s,a)}\right)^2.
    \end{align*}
    Now we start to estimate $\sup_{s \in S, a\in A}\Abs{\Delta(s,a)}$,
    \begin{equation*}
        \begin{aligned}
            \sup_{s \in S, a\in A}\Abs{\Delta(s,a)} 
            & = \sup_{s \in S, a\in A} \left| \gamma \left(\max_{a' \in A} f(s,\tilde{\theta}(a')) - \max_{a'' \in A} f(\tilde{g}(s,a,\Delta t),\tilde{\theta}(a''))\right) \right| \\
            & \leq \sup_{s \in S, a\in A} \left| \gamma \max_{a' \in A}\Abs{ f(s,\tilde{\theta}(a')) - f(\tilde{g}(s,a,\Delta t),\tilde{\theta}(a'))} \right| \\
            & \leq \gamma \sum_{a' \in A} \sup_{s \in S,a\in A} \Abs{ f(s,\tilde{\theta}(a')) - f(\tilde{g}(s,a,\Delta t),\tilde{\theta}(a'))}.
        \end{aligned}
    \end{equation*}
    We define $\Gamma(a) = \Abs{u^{\T}_a}(I+3\Abs{U^{[L]}_a}\Abs{W^{[L]}_a})\ldots(I+3\Abs{U^{[1]}_a}\Abs{W^{[1]}_a})\Abs{V_a} \in \mathbb{R}^d$. For all $s \in S, a\in A$, from Assumption \ref{assum::SemiGroup} we have
    \begin{equation*}
        \begin{aligned}
            \sup_{s \in S,a\in A} \Abs{ f(s,\tilde{\theta}(a')) - f(\tilde{g}(s,a,\Delta t),\tilde{\theta}(a'))} 
            & \leq \sup_{s \in S,a\in A} \left(\Gamma(a') |s| - \Gamma(a') |\tilde{g}(s,a,\Delta t)| \right)\\
            & \leq \sup_{s \in S,a\in A} \left(\Gamma(a') |s - \tilde{g}(s,a,\Delta t)| \right)\\
            &\leq C_T \Delta t \norm{\tilde{\theta}(a')}_{\fP}.
        \end{aligned}
    \end{equation*}
    We have $\sup_{s \in S, a\in A}\Abs{\Delta(s,a)}  \leq \gamma C_T \Delta t \norm{\tilde{\Theta}}_{\fP}$. So the relation between $\tilde{\fR}_{\fD}(\tilde{\Theta})$ and $\fR_{\fD}(\tilde{\Theta})$ is
    \begin{equation}\label{eqn::original_loss_relation}
        \fR_{\fD}(\tilde{\Theta}) \leq 2\tilde{\fR}_{\fD}(\tilde{\Theta}) + \gamma^2 C_T^2 (\Delta t)^2 \norm{\tilde{\Theta}}_{\fP}^2.
    \end{equation}
    From Theorem \ref{prop::ApproxErr4Sum}, we have
    \begin{equation}\label{eqn::theta_bound}
        \norm{\tilde{\Theta}}_{\fP} \le  \frac{12}{1-\gamma}(\Abs{A}^{7/2} + \Abs{A}^{1/2})\norm{r(\cdot,\cdot)}_{\fB} \leq \frac{24}{1-\gamma}\Abs{A}^{7/2}\norm{r(\cdot,\cdot)}_{\fB}.
    \end{equation}
    We have simplified the expression $\tilde{\fR}_{\fD}(\tilde{\Theta})$ as follows
    \begin{equation}\label{eqn::effective_loss_bound}
        \begin{aligned}
            \tilde{\fR}_{\fD}(\tilde{\Theta}) & \leq \frac{1}{(1-\gamma)^2}(\Abs{A}+1)(\Abs{A}^4+1)\frac{32\norm{r(\cdot,\cdot)}_{\fB}^2}{m} \leq \frac{128}{(1-\gamma)^2} \Abs{A}^{5}\frac{\norm{r(\cdot,\cdot)}_{\fB}^2}{m}.
        \end{aligned}    
    \end{equation}
    So by combining  \eqref{eqn::original_loss_relation},  \eqref{eqn::theta_bound} and \eqref{eqn::effective_loss_bound}, we have the approximation error of $\fR_{\fD}(\tilde{\Theta})$.
\end{proof}

Now we start to give the \emph{a posteriori} estimates of the generalization error. In order to estimate the Rademacher complexity of $\fG_{M}$, we need to estimate the complexity of $\fZ^{\max}_M$. 

\begin{lem}[function space with maximum operation]\label{lem::RNSpace4Max}
    Let $\fF$ defined in \eqref{eq::fF} consist of $f(\cdot,\theta(\cdot))$ with size $(d,m,D,L)$, then there exists $\tilde{m}, \tilde{D}, \tilde{L}$ and $\tilde{\fF}$ which consists of all real-valued residual network of size $(d,\tilde{m}, \tilde{D}, \tilde{L})$ such that $\fZ^{\max}\subseteq \tilde{\fF}$ and that $\fZ^{\max}_M\subseteq \tilde{\fF}_{R}$, where $\tilde{\fF}_{R}=\left\{\tilde{f}(\cdot, \theta)\in \tilde{\fF}\colon\, \norm{\theta}_{\fP}\le R \right\}$ and $R=\Abs{A}^3 M$.
\end{lem}
\begin{proof}
    For any $s \in S$ and given $f \in \fF$, define $\Pi^{(0)}=\{\Pi^{(0)}_{1}(s),\ldots, \Pi^{(0)}_{2^{\alpha}}(s)\}$ such that
    \begin{equation*}
        \Pi^{(0)}_{j}(s)=f(s,\theta(j)), \text{ for all } j\in \{1,\ldots,2^{\alpha}\}.
    \end{equation*} 
    Also, for $k\in \{1,\ldots, \alpha\}$, define $\Pi^{(k)}=\{\Pi^{(k)}_{1}(s),\ldots, \Pi^{(k)}_{2^{\alpha-k}}(s)\}$ such that 
    \begin{equation*}
        \Pi^{(k)}_{j}(s)=\max\{ \Pi^{(k-1)}_{2j-1}(s),\Pi^{(k-1)}_{2j}(s)\}, \text{ for all } j\in \{1,\ldots,2^{\alpha-k}\}.
    \end{equation*}
    And it is easy to see that $\max_{a'\in A}f(s, \theta(a'))=\Pi^{(\alpha)}_{1}(s).$

    For every $k\in \{0,1,\ldots, \alpha\}$, we donate $\tilde{\fF}^k$ as the function space consists of all real-valued residual network of size $(d,m_k,D_k,L_k)$, where $\tilde{\fF}^0$ contains all the residual network with size $(d, m, D, L)$ and $m_k = 6 m_{k-1}, D_k = 6D_{k-1} + 4, L_k = L_{k-1} + 1, m_0=m, D_0=D, L_0 = L$.
    
    We prove by induction to show that $\Pi^{(k)}_{j}(s)\in \tilde{\fF}^k$ for $j\in \{1,\ldots, 2^{\alpha-k}\}$, and if $f \in \fF_{M}$, which indicates $f(\cdot, \theta(a)) \in \hat{\fF}_M$ for all $a \in A$, then $\Pi^{(k)}_{j}(s)\in \tilde{\fF}^k_{7^kQ}=\{\tilde{f}(\cdot, \theta)\in \tilde{\fF}^k: \norm{\theta}_{\fP}\le 7^kM\}$.
    
    \textbf{Step $0$: } If $k=0$, we can choose $\tilde{\fF}^0=\hat{\fF}$ and $R=M$.  
    
    \textbf{Step $k$: } Suppose that $k\ge 1$, $\Pi^{(k)}_{j}(s) \in \tilde{\fF}^k$ and $\Pi^{(k)}_{j}(s)\in \tilde{\fF}^k_{7^k M}=\{\tilde{f}(\cdot, \theta)\in \tilde{\fF}^k: \norm{\theta}_{\fP} \le 7^k M\}$. Consider the case of $k+1$. We use the same argument in Proposition \ref{prop::ApproxErr4Max} here. Since 
    \begin{equation*}
        \begin{aligned}
            \Pi^{(k+1)}_{j}(s)
            &=\max\{ \Pi^{(k)}_{2j-1}(s),\Pi^{(k)}_{2j}(s)\}\\
            &=\frac{1}{2}\left(\Pi^{(k)}_{2j-1}(s)+\Pi^{(k)}_{2j}(s)+\sigma\bigl(\Pi^{(k)}_{2j-1}(s)-\Pi^{(k)}_{2j}(s)\bigr)  +\sigma\bigl(\Pi^{(k)}_{2j}(s)-\Pi^{(k)}_{2j-1}(s)\bigr)\right),
        \end{aligned}
    \end{equation*}
    and that $\Pi^{(k)}_{2j-1}(s),\Pi^{(k)}_{2j}(s)\in \tilde{\fF}^k$. By Lemmas~\ref{lem::CombRN},~\ref{lem::DeepenRN} and~\ref{lem::DeepenRN4Activate}, there is $m_{k+1},D_{k+1},L_{k+1}$ and an residual network $g(s, \breve{\theta}^{(k+1)}_{j})$ of size $(d,m_{k+1},D_{k+1},L_{k+1})$, which indicates $\Pi^{(k+1)}_{j}(s) \in \tilde{\fF}^{k+1}$, such that
    \begin{equation*}
        g(s, \breve{\theta}^{(k+1)}_{j})=\max\{ \Pi^{(k)}_{2j-1}(s),\Pi^{(k)}_{2j}(s)\}=\Pi^{(k+1)}_{j}(s).
    \end{equation*}
    Moreover, if $\Pi^{(k)}_{2j-1}(s),\Pi^{(k)}_{2j}(s)\in \tilde{\fF}^k_{7^k M}$, by Lemmas~\ref{lem::CombRN},~\ref{lem::DeepenRN} and~\ref{lem::DeepenRN4Activate} again, $\norm{\breve{\theta}^{(k+1)}_j}_{\fP}\le 7^{k+1} M$.
    Thus, by induction, there is $\tilde{\fF}$ which consists of all real-valued residual network of size $(d,\tilde{m}, \tilde{D}, \tilde{L})$ such that $\fZ^{\max}\subseteq\tilde{\fF}$ and that $\fZ^{\max}_M\subseteq\tilde{\fF}_{7^{\alpha}M}\subseteq\tilde{\fF}_{\Abs{A}^3 M}$
\end{proof}

The Rademacher complexity of $\fG_{M}$ can be bounded by the addition of $\fZ^{\max}_M$ and $\fF_M$.
\begin{lem}[Rademacher complexity of function space]\label{lem::Rad}
    Given $M>0$, the Rademacher complexity of $\fG_{M}$ over a set of $n$ i.i.d. samples from $\fD \times \fU$, donated as $\fS = \{s_i, a_i, s_i', r\}_{i=1}^{n}$, has an upper bound
    \begin{equation*}
        \operatorname{Rad}(\fG_{M})\le 6\Abs{A}^3 M\sqrt{\frac{2\ln (2d)}{n}}.
    \end{equation*}
\end{lem}

\begin{proof}
    From~\cite{ma2019priori} Theorem 2.7, given any $n' \in N^+$, $a \in A$ and a sample $\{s_i\}_{i=1}^{n'}$, we have
    \begin{equation*}
        \mathbb{E}_{\tau} \sup_{f \in \fF_M} \sum_{i=1}^{n'} \tau_i f(s_i,a) \leq 3M\sqrt{2\ln(2d)} \times \sqrt{n'}.
    \end{equation*}
    From Lemma \ref{lem::RNSpace4Max}, for each $f \in \fF_{M}$, there exist an residual network $g(\cdot, \theta) \in \tilde{\fF}_{|A|^3 M}$ that has $g(s_i', \theta)=\max_{a' \in A} f(s_i',a')$, then given any $n'' \in N^+$ and a sample $\{s_i'\}_{i=1}^{n''}$, we have
    \begin{equation*}
        \mathbb{E}_{\tau} \sup_{f \in \fF_M} \sum_{i=1}^{n''} \tau_i \max_{a' \in A} f(s_i',a') = \mathbb{E}_{\tau} \sup_{g \in \tilde{\fF}_{|A|^3 M}} \sum_{i=1}^{n''} \tau_i g(s_i',\theta) \leq 3\Abs{A}^3 M\sqrt{2\ln(2d)} \times \sqrt{n''}.
    \end{equation*}
    then we can obtain
    \begin{equation*}
        \begin{aligned}
            \operatorname{Rad}(\fG_{M})
            \leq & \frac{1}{n}\mathbb{E}_{\tau} \left[ \mathbb{E}_{\fS} \left[ \sup_{f \in \fF_M} \sum_{i=1}^{n} \tau_i f(s_i,a_i)\right]\right] + \gamma\mathbb{E}_{\fS} \left[ \frac{1}{n} \mathbb{E}_{\tau} \left[ \sup_{f \in \fF_M} \sum_{i=1}^{n} \tau_i \max_{a' \in A} f(s_i', a') \right]\right] \\
            \leq & \frac{1}{n}\sum_{a \in A}\mathbb{E}_{\tau} \left[ \mathbb{E}_{\fS} \left[ \sup_{f \in \fF_M} \sum_{1 \leq i \leq n, a_i=a}  \tau_i f(s_i,a)\right]\right] + 3\gamma\Abs{A}^3 M \sqrt{\frac{2\ln(2d)}{n}} \\
            \leq & \frac{3 M \sqrt{2\ln(2d)}}{n} \sum_{a \in A} \mathbb{E}_{\tau} \left[ \mathbb{E}_{\fS}\left[ \sqrt{ \left\{ i, a_i=a \right\}}\right] \right] + 3\Abs{A}^3 M\sqrt{\frac{2\ln(2d)}{n}} \\
            \leq & 3 (\Abs{A}^3 + \sqrt{|A|})M\sqrt{\frac{2\ln (2d)}{n}} \leq 6\Abs{A}^3 M\sqrt{\frac{2\ln(2d)}{n}}.
        \end{aligned}
    \end{equation*}
\end{proof}

Combining Lemma \ref{lem::Rad} and Theorem \ref{thm::RadeandGenErr}, we have the following result.
\begin{proof} [Proof of Theorem \ref{thm::PosterGenBoun}]
    First of all, the concrete form of the \emph{a posteriori} generalization error bound we are trying to prove is
    \begin{equation*}
        \Abs{\fR_{\fD}(\Theta)-\fR_{\fS}(\Theta)}\le \frac{(\norm{\Theta}_{\fP}+1)^2}{\sqrt{n}}\left(36 |A|^3\sqrt{2\ln (2d)}+\frac{9}{2}\left( \ln (4(\norm{\Theta}_{\fP}+1)) + \ln(\Abs{A}/\delta) \right)\right).
    \end{equation*}

    Define 
    \begin{equation*}
        \begin{aligned}
            \fH&:=\left\{ \frac{1}{2}\left(f(s, \theta(a))-r(s,a)-\gamma\max_{a'\in A}f(s', \theta(a'))\right)^2\colon\ f(\cdot,\theta(\cdot))\in \fF, a \in A\right\}, \\
            \fH_{M}&:=\left\{ \frac{1}{2}\left(f(s, \theta(a))-r(s,a)-\gamma\max_{a'\in A}f(s', \theta(a'))\right)^2\colon\ f(\cdot,\theta(\cdot))\in \fF_M, a \in A \right\},
        \end{aligned}
    \end{equation*} 
    then $\fH=\bigcup_{M=1}^{\infty}\fH_{M}$. Note that for all $s \in S \subseteq [0,1]^d$
    \begin{equation*}
        \sup_{s\in S, a \in A}\Abs{f(s, \theta(a))}\le \sup_{s\in S, a \in A}\Abs{u^{\T}_a}(I+3\Abs{U^{[L]}_a}\Abs{W^{[L]}_a})\ldots(I+3\Abs{U^{[1]}_a}\Abs{W^{[1]}_a})\Abs{V_a}\Abs{s}\le \norm{\Theta}_{\fP},
    \end{equation*}
    thus for functions in $\fH_M$, since from Assumption \ref{assump::RewardFunctionBarron} we know $\Abs{r(s,a)}\le 1$, we have
    \begin{equation*}
        \begin{aligned}
            \frac{1}{2}\left(f(s, \theta(a))-r(s,a)-\gamma\max_{a'\in A}f(s', \theta(a'))\right)^2 \le \frac{1}{2}(1+2\sup_{s\in  S,a\in A}\Abs{f(s, \theta(a))}) \le \frac{9}{2}M^2,
        \end{aligned}
    \end{equation*}
    for all $s \in  S$ and all $M \ge 1$. Moreover, since $l(\cdot,r(s,a)):=(\cdot-r(s,a))^2$ is a Lipschitz function with Lipschitz constant no more that $2M+1$, combining Lemma \ref{lem::Rad} and Lemma \ref{lem::RadContrac} we have
    \begin{equation*}
        \begin{aligned}
            \operatorname{Rad}(\fH_{M})
            &\le \left( 2M + 1 \right)\operatorname{Rad}(\fF_{M})\\
            & \leq \left( 2M + 1 \right) 6 |A|^3 M\sqrt{\frac{2\ln (2d)}{n}} \\
            &\le 18 |A|^3 M^2\sqrt{\frac{2\ln (2d)}{n}} ,\ M\ge 1.
        \end{aligned}   
    \end{equation*}
    By Theorem~\ref{thm::RadeandGenErr}, this directly leads to that, for any $\delta\in (0,1)$, there is $1-\delta_M$ over the choice of $S$ with $\delta_M=\frac{6\delta}{\Abs{A}\pi^2M^2}$, where $\pi$ represents circumference ratio here. Thus we have
    \begin{equation*}
        \sup_{\norm{\Theta}_{\fP} \leq M} \Abs{\fR_{\fS}(\Theta)-\fR_{\fD}(\Theta)}\le \frac{36 |A|^3 M^2\sqrt{2\ln (2d)}}{\sqrt{n}}+\frac{9 M^2}{2}\sqrt{\frac{\ln (\Abs{A}\pi^2M^2/3\delta)}{2n}},
    \end{equation*}
    and we can choose $M$ such that  $\norm{\Theta}_{\fP} \le M \le \norm{\Theta}_{\fP}+1$, where $\norm{\Theta}_{\fP}$ is the path norm of the parameter. Hence we have with probability $1-\frac{\delta}{\Abs{A}}$ over the choice of $\fS$
    \begin{equation*}
        \begin{aligned}
             \Abs{\fR_{\fD}(\Theta)-\fR_{\fS}(\Theta)}
             &\le \frac{36 |A|^3 M^2\sqrt{2\ln (2d)}}{\sqrt{n}}+\frac{9M^2}{2}\sqrt{\frac{\ln (\Abs{A}\pi^2M^2/3\delta)}{2n}}\\
             &\le \frac{(\norm{\Theta}_{\fP}+1)^2}{\sqrt{n}}\left(36 |A|^3\sqrt{2\ln (2d)}+\frac{9}{2}\left( \ln (4(\norm{\Theta}_{\fP}+1)) + \ln(\Abs{A}/\delta) \right)\right).
        \end{aligned}
    \end{equation*}
    The last inequality holds due to
    \begin{equation*}
        \begin{aligned}
            \sqrt{\frac{\ln (\Abs{A}\pi^2(\norm{\Theta}_{\fP}+1)^2)/3\delta)}{2}}
            & \leq \sqrt{\ln (\pi(\norm{\Theta}_{\fP}+1)) + \frac{\ln(\Abs{A}/\delta)}{2}} \\
            & \leq \ln (\pi(\norm{\Theta}_{\fP}+1)) + \frac{\ln(\Abs{A}/\delta)}{2} \\
            & \leq \ln (4(\norm{\Theta}_{\fP}+1)) + \ln(\Abs{A}/\delta)\\
        \end{aligned}
    \end{equation*}
    The second inequality hold due to $\ln (\pi(\norm{\Theta}_{\fP}+1)) \geq 1$ and $\ln(\Abs{A}/\delta) \geq 0$.
\end{proof}

Combining Theorem \ref{thm::apprErrorOrigin} and Theorem \ref{thm::PosterGenBoun}, we can have the final result.

\begin{proof}[Proof of Theorem \ref{thm::AprioriGenErrBoun}]
    First of all, the concrete form of the \emph{a priori} generalization error bound we are trying to prove is
    \begin{equation*}
        \begin{aligned}
            \fR_{\fD}(\hat{\Theta}) 
            & \leq \frac{1}{(1-\gamma)^2} \left( 256 \Abs{A}^{5} \frac{\norm{r(\cdot,\cdot)}_{\fB}^2}{m}  +   576 \gamma^2 C_T^2 (\Delta t)^2  \Abs{A}^{7} \norm{r(\cdot,\cdot)}^2_{\fB} \right) \\
            &~~~+\frac{576\norm{r(\cdot,\cdot)}_{\fB}^2+2}{(1-\gamma)^2\sqrt{n}}\Abs{A}^7(\lambda+1)\left(72 |A|^3 \sqrt{2\ln (2d)} \right.\\
            &~~~+\left. 9 \ln(4(\frac{24}{1-\gamma}\Abs{A}^{7/2}\norm{r(\cdot,\cdot)}_{\fB}+1)) +  9\ln(\Abs{A}/\delta) \right).
        \end{aligned}
    \end{equation*}

    Note that
    \begin{equation*}
        \fR_{\fD}(\hat{\Theta})=\fR_{\fD}(\tilde{\Theta})+[\fR_{\fD}(\hat{\Theta})-\fJ_{S,\lambda}(\hat{\Theta})]+[\fJ_{S,\lambda}(\hat{\Theta})-\fJ_{S,\lambda}(\tilde{\Theta})]+[\fJ_{S,\lambda}(\tilde{\Theta})-\fR_{\fD}(\tilde{\Theta})].
    \end{equation*}
    By definition, $\fJ_{S,\lambda}(\hat{\Theta})-\fJ_{S,\lambda}(\tilde{\Theta})\le 0$ and Theorem~\ref{thm::apprErrorOrigin}, we have
    \begin{equation}\label{eq::GenErr}
        \begin{aligned}
            \fR_{\fD}(\hat{\Theta}) & \le \frac{1}{(1-\gamma)^2} \left( 256 \Abs{A}^{5} \frac{\norm{r(\cdot,\cdot)}_{\fB}^2}{m}  +   576 \gamma^2 C_T^2 (\Delta t)^2  \Abs{A}^{7} \norm{r(\cdot,\cdot)}^2_{\fB} \right)\\
            &~~~~+ [\fR_{\fD}(\hat{\Theta})-\fJ_{S,\lambda}(\hat{\Theta})]+[\fJ_{S,\lambda}(\tilde{\Theta})-\fR_{\fD}(\tilde{\Theta})].
        \end{aligned}
    \end{equation}
    By Theorem~\ref{thm::PosterGenBoun}, we have with probability at least $1-\delta/2$,
    \begin{equation}\label{eq::Risk&EmpPena}
        \begin{aligned}
             & \fR_{\fD}(\hat{\Theta})-\fJ_{S,\lambda}(\hat{\Theta}) \\
            = & \fR_{\fD}(\hat{\Theta})-\fR_{\fS}(\hat{\Theta})-\frac{\lambda}{\sqrt{n}}\norm{\hat{\Theta}}_{\fP}^2\ln (4(\norm{\hat{\Theta}}_{\fP}+1))\\
            \le & \frac{2(\norm{\hat{\Theta}}_{\fP}^2+1)}{\sqrt{n}}\left(36|A|^3\sqrt{2\ln (2d)}+\frac{9}{2}\left( \ln (4(\norm{\hat{\Theta}}_{\fP}+1)) + \ln(\Abs{A}/\delta) \right)\right)\\
            &~~~-\frac{\lambda}{\sqrt{n}}\norm{\hat{\Theta}}_{\fP}^2\ln (4(\norm{\hat{\Theta}}_{\fP}+1))\\
            \leq & \frac{\norm{\hat{\Theta}}_{\fP}^2}{\sqrt{n}}\ln(4(\norm{\hat{\Theta}}_{\fP}+1)) \left(72|A|^3\sqrt{2\ln (2d)} + 9\ln(\Abs{A}/\delta) + 18 - \lambda \right) \\
            &~~~+ \frac{1}{\sqrt{n}}\left(72|A|^3\sqrt{2\ln (2d)}+9\ln(\Abs{A}/\delta) + 9 \ln{8}\right)\\
            \leq & \frac{1}{\sqrt{n}}\left(72|A|^3\sqrt{2\ln (2d)}+9\ln(\Abs{A}/\delta) + 9 \ln{8}\right).
        \end{aligned}
    \end{equation}
    where the second inequality hold due to $\ln(4(\norm{\hat{\Theta}}_{\fP}+1)) > 1$ and $\norm{\hat{\Theta}}_{\fP}^2 \ln(4(\norm{\hat{\Theta}}_{\fP}+1)) + \ln 8 \geq \ln(4(\norm{\hat{\Theta}}_{\fP}+1))$,  last inequality due to $\lambda \geq 72|A|^3\sqrt{2\ln (2d)} + 9\ln(\Abs{A}/\delta) + 18$.
    And by Theorem~\ref{thm::PosterGenBoun} again, we have with probability at least $1-\frac{\delta}{2}$ over the choice of uniform distributed data $\fS:=\{s_i,a_i, s_i',r\}_{i=1}^{n}$,
    \begin{equation*}
        \begin{aligned}
            \fJ_{S,\lambda}(\tilde{\Theta})-\fR_{\fD}(\tilde{\Theta})
            & = \fR_{\fS}(\tilde{\Theta}) - \fR_{\fD}(\tilde{\Theta}) +\frac{\lambda}{\sqrt{n}}\norm{\tilde{\Theta}}_{\fP}^2\ln (\pi(\norm{\tilde{\Theta}}_{\fP}+1)) \\
            &\le \frac{\norm{\tilde{\Theta}}_{\fP}^2+1}{\sqrt{n}}\left(72|A|^3 \sqrt{2\ln (2d)}+ 9\left(\ln (4(\norm{\tilde{\Theta}}_{\fP}+1)) + \ln(\Abs{A}/\delta) \right)\right)\\
            &~~~+\frac{\lambda}{\sqrt{n}}\norm{\tilde{\Theta}}_{\fP}^2\ln (4(\norm{\tilde{\Theta}}_{\fP}+1)),
        \end{aligned}
    \end{equation*}
    and by $\norm{\tilde{\Theta}}_{\fP} \le \frac{24}{1-\gamma}\Abs{A}^{7/2}\norm{r(\cdot,\cdot)}_{\fB}$ we have
    \begin{equation}\label{eq::EmpPena&Emp}
        \begin{aligned}
            \fJ_{S,\lambda}(\tilde{\Theta})-\fR_{\fD}(\tilde{\Theta})
            &\le \frac{576\norm{r(\cdot,\cdot)}^2_{\fB}+1}{(1-\gamma)^2\sqrt{n}} \Abs{A}^7 \Biggl(72|A|^3\sqrt{2\ln (2d)} \\
            &~+9\left(\ln \left(\frac{96}{1-\gamma}\Abs{A}^{7/2}\norm{r(\cdot,\cdot)}_{\fB}+4\right) \ln(\Abs{A}/\delta) \right) \Biggl)\\
            &~+ \frac{576\lambda}{(1-\gamma)^2\sqrt{n}}\norm{r(\cdot,\cdot)}_{\fB}^2\Abs{A}^7\ln \left(\frac{96}{1-\gamma}\Abs{A}^{7/2}\norm{r(\cdot,\cdot)}_{\fB}+4\right).
        \end{aligned}
    \end{equation}
    Combining equations~\eqref{eq::GenErr}~\eqref{eq::Risk&EmpPena} and~\eqref{eq::EmpPena&Emp} together, we have the final result.
    % \begin{equation*}
    %     \begin{aligned}
    %         \fR_{\fD}(\hat{\Theta}) 
    %         & \leq \frac{1}{(1-\gamma)^2} \left( 256 \Abs{A}^{5} \frac{\norm{r(\cdot,\cdot)}_{\fB}^2}{m}  +   576 \gamma^2 C_T^2 (\Delta t)^2  \Abs{A}^{7} \norm{r(\cdot,\cdot)}^2_{\fB} \right) \\
    %         &~~~+\frac{576\norm{r(\cdot,\cdot)}_{\fB}^2+2}{(1-\gamma)^2\sqrt{n}}\Abs{A}^7(\lambda+1)\Biggl(72 |A|^3 \sqrt{2\ln (2d)} \\
    %         &~~~+ 9 \ln\left(\frac{96}{1-\gamma}\Abs{A}^{7/2}\norm{r(\cdot,\cdot)}_{\fB}+4 \right) +  9\ln(\Abs{A}/\delta) \Biggr).
    %     \end{aligned}
    % \end{equation*}
    % This complete the proof.
\end{proof}

\section{Conclusion}

This work provides an \emph{a priori} generalization error for continuous-time control problems with residual networks. We incorporate the discretization process into our modeling process,which is considering a discrete transition function. We assume that the transition function satisfies semi-group and Lipschitz properties, making the model aligned with the laws of object motion in the real world. Based on the assumptions, we propose a method to directly estimate the \emph{a priori} generalization error of the Bellman optimal loss. The crux of this method lies in the two transformation of the loss function and the using the neural network structure to replace the maximum operation. Our result can help ones in selecting the time step of discretization. In particular, our method does not require a boundedness assumption, which is closer to the practical applications' setting. In this work, we focus on deterministic environments and construct the Bellman optimal loss on a uniform distribution. We envision two extensions. First, we aim to extend our analysis techniques to stochastic environments and address the issue of biased estimation. Second, we plan to incorporate a sampling mechanism and dealing with distribution shift.

\section*{Acknowledgments}
This work is sponsored by the STI 2030-Major Project (New Generation Artificial Intelligence) under Grant 2021ZD0114203, the National Key R\&D Program of China  Grant No. 2022YFA1008200 (T. L.), the National Natural Science Foundation of China Grant No. 12101401 (T. L.), Shanghai Municipal Science and Technology Key Project No. 22JC1401500 (T. L.), Shanghai Municipal of Science and Technology Major Project No. 2021SHZDZX0102, and the HPC of School of Mathematical Sciences and the Student Innovation Center, and the Siyuan-1 cluster supported by the Center for High Performance Computing at Shanghai Jiao Tong University. We have used ChatGPT to revise the writing style and grammar.

%-------------------- 参考文献和附录 ---------------------
% References
%\printbibliography
%Bibliography
\bibliographystyle{unsrt}  
\bibliography{references}  

% Appendices
\end{document}